\newtheorem{lemma}{Lemma}
\newtheorem{remark}{Remark}
\newtheorem{example}{Example}
\DeclareMathOperator*{\argmin}{arg\,min}
\begin{document}

%
\runningtitle{CAD-DA: Controllable Anomaly Detection after Domain Adaptation by Statistical Inference}

%

\twocolumn[

\aistatstitle{CAD-DA: Controllable Anomaly Detection after \\ Domain Adaptation by Statistical Inference}

\aistatsauthor{ Vo Nguyen Le Duy \And Hsuan-Tien Lin \And  Ichiro Takeuchi }

\aistatsaddress{ RIKEN \And  National Taiwan University \And Nagoya University/RIKEN } ]

\begin{abstract}

We propose a novel statistical method for testing the results of anomaly detection (AD) under domain adaptation (DA), which we call CAD-DA---controllable AD under DA. The distinct advantage of the CAD-DA lies in its ability to control the probability of misidentifying anomalies under a pre-specified level $\alpha$ (e.g., 0.05). The challenge within this DA setting is the necessity to account for the influence of DA to ensure the validity of the inference results. We overcome the challenge by leveraging the concept of Selective Inference to handle the impact of DA. To our knowledge, this is the first work capable of conducting a valid statistical inference within the context of DA. We evaluate the performance of the CAD-DA method on both synthetic and real-world datasets.

\end{abstract}


\section{Introduction} \label{sec:intro}


\begin{figure*}[!t]
\centering
\includegraphics[width=.85\textwidth]{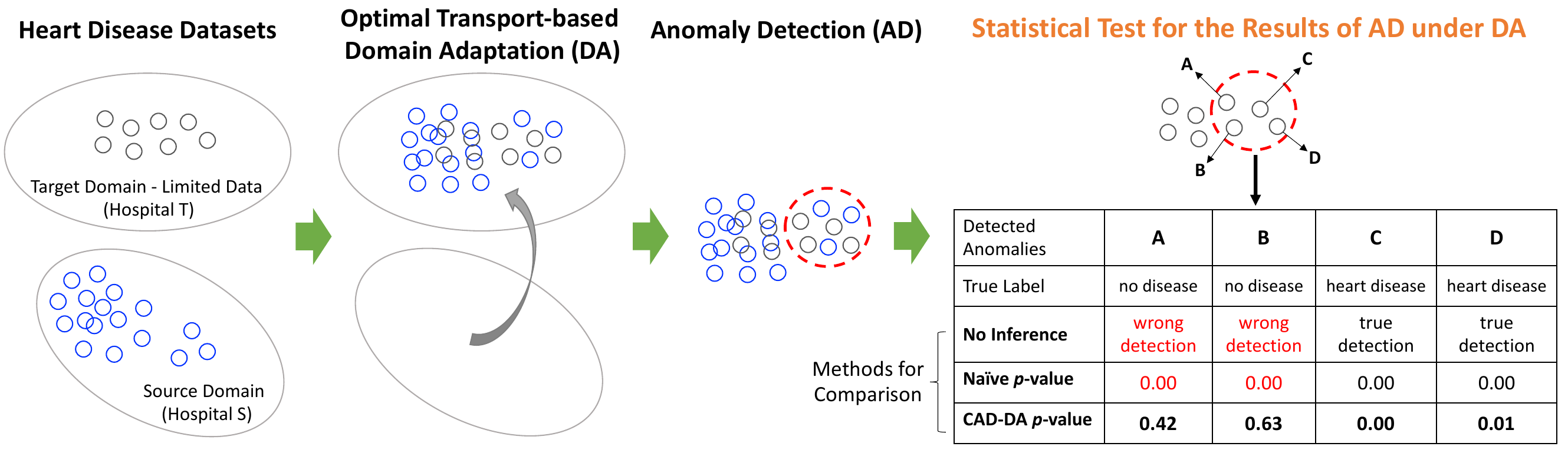}
\caption{Illustration of the proposed method.
Conducting AD-DA without inference results wrong anomalies (\textbf{A}, \textbf{B}).
The naive $p$-values are small even for falsely detected anomalies.
The proposed CAD-DA can identify both false positive (FP) and true positive (TP) detections, i.e., large $p$-values for FPs and small $p$-values for TPs.
}
\label{fig:illustration}
\vspace{-5pt}
\end{figure*}

%

Anomaly detection (AD) is a fundamental problem in machine learning and statistics, as the vast body of literature surveyed by \cite{aggarwal2017outlier} suggests.
%
%
%
%
%
The goal of AD is to identify rare and unusual observations that deviate significantly from the norm within a given dataset.
AD plays a critical role in several applications and has been widely applied in many areas such as medical \cite{wong2002rule, aggarwal2005abnormality}, fraud detection \cite{pourhabibi2020fraud}, and damage detection \cite{avci2021review, du2020damage}.

In numerous real-world scenarios, the availability of limited data can lead to poor performance of AD. 
To overcome this challenge, a strategy of increasing the sample size by transferring data points from a readily accessible source domain to the present target domain can be employed. 
This type of problem is known as Domain Adaptation (DA).
By leveraging data-rich source domains to bolster the data pool in the target domain, DA aims to enhance the efficacy of AD in scenarios where limited data hamper their effectiveness.

A critical concern arises regarding to the possibility of erroneous detection.
%
The AD could misidentify certain observations as anomalies, even though they are actually normal. 
These errors are commonly referred to as \emph{false positives},
%
which can cause serious consequences in high-stake decision making.
%
Especially, when conducting AD under DA, there is an increased risk of misclassifying normal instances as anomalies due to potential DA errors.
%
%
For instance, in the medical field, some unhealthy individuals transformed from the source domain may become closely similar to healthy individuals in the target domain.
Thus, we mistakenly classify a healthy individual as unhealthy, and we could inadvertently administer drugs that may harm their well-being. 
%
%
%
Hence, there is a critical need of an inference method  for controlling the false positive rate (FPR).

In AD-DA, controlling the false negative rate (FNR) is also important.
%
In the literature of statistics, a common procedure is to initially control the FPR at a specified level $\alpha$, e.g., 0.05, while concurrently seeking to minimize the FNR, i.e., maximizing the true positive rate (${\rm TPR} = 1 - {\rm FNR}$)
by empirical evidences.
In this paper, we also follow this established practice.
We propose a method to theoretically control the probability of misidentifying a normal observation as an anomaly while endeavoring to minimize the probability of misidentifying an anomaly as normal.

To our knowledge, none of the existing method can control the FPR of AD under the context of DA.
The main challenge is that, without accounting for the influence of DA, the FPR can not be properly controlled.
%
%
The authors of \cite{tsukurimichi2022conditional} proposed a method for testing the anomalies when they are detected by a class of robust regression methods \cite{huber1973robust, andrews1974robust, zaman2001econometric, rousseeuw2005robust, maronna2019robust}.
%
However, this method supposes that the data comes from the same distribution, and it is invalid in the scenario that a distribution shift occurs and DA must be applied.

Our idea is to leverage \emph{Selective Inference (SI)} \cite{lee2016exact} for resolving this challenge.
However, directly incorporating SI in our setting is still non-trivial because SI is highly \emph{problem-specific}.
Therefore, we need to carefully examine the selection strategy of the algorithm in the context of AD after DA.
Moreover, if we naively leverage the ideas of existing SI methods \cite{lee2016exact, duy2021exact}, the power of the test is significantly low, i.e, the FNR is high.
Therefore, we need to introduce an approach to minimize the FNR while properly controlling the FPR.
We would like to note that we start this new research direction with the Optimal Transport (OT)-based DA \cite{flamary2016optimal}, which is recently promising and popular in the OT community. The detailed discussions on future extensions to other types of DA are provided in \S \ref{sec:discussion}.


\paragraph{Contributions.} Our contributions are as follows:

$\bullet$ We mathematically formulate the problem of testing AD results under DA and introduce \emph{CAD-DA}, a novel method to conduct the statistical test with controllable FPR. 
Compared to the literature on controllable AD, CAD-DA presents a unique challenge in addressing the effect of DA to ensure the validity of FPR control.

$\bullet$ We overcome the challenge by leveraging the SI framework to handle the influence of DA.
We carefully examine the selection strategy for OT-based DA, whose operations can be characterized by linear/quadratic inequalities, and prove that achieving controllable AD-DA is indeed possible.
Furthermore, we introduce a more strategic approach to enhance the TPR.
To our knowledge, this is the first work capable of conducting valid inference within the context of DA.

%

$\bullet$ We conduct experiments on both synthetic and real-world datasets to support our theoretical results, showcasing superior performance of the CAD-DA.


\begin{example}
To show the importance of the proposed method, we provide an example presented in Fig. \ref{fig:illustration}.
Our goal is to detect patients in Hospital T with heart disease, treated as anomalies, in a scenario where the number of patients is limited.
Here, the source domain consists of patients in Hospital S, while the target domain comprises patients in Hospital T.
We employ the OT-based DA approach to transfer the data from the source domain to the target domain.
Subsequently, we apply an AD algorithm, i.e., mean absolute deviation.
The AD algorithm erroneously identified two healthy individuals as anomalies. 
To address this issue, we conducted an additional inference step using the proposed $p$-values, allowing us to identify both true positive and false positive detections.
Furthermore, we repeated the experiments $N$ times and the FPR results are shown in Tab. \ref{tbl:example_intro}.
With the proposed method, we were able to control the FPR under $\alpha = 0.05$, which other competitors were unable to achieve. 
\end{example}

\begin{table}[t]
\renewcommand{\arraystretch}{1.2}
\centering
\caption{The importance of the proposed method lies in its ability to control the False Positive Rate (FPR).}
\vspace{-5pt}
\begin{tabular}{ |l|c|c|c| } 
  \hline
  & \textbf{No Inference} & \textbf{Naive} & \textbf{CAD-DA} \\
  \hline
  $N = 120$ & FPR = 1.0 & 0.6 & \textbf{0.04} \\
   \hline
  $N = 240$ & FPR = 1.0 & 0.7 & \textbf{0.05} \\
  \hline
\end{tabular}
\label{tbl:example_intro}
\vspace{-10pt}
\end{table}

\textbf{Related works.}
Although there exists an extensive body of literature on AD methods \cite{aggarwal2017outlier}, there has been a limited exploration of applying the hypothesis testing framework to assess the results of AD. 
For instance, in \cite{srivastava1998outliers} and \cite{pan1995multiple}, the likelihood ratio test has been discussed to determine if an individual data point is an anomaly, employing the mean-shift model.
%
However, these classical anomaly inference methods hold validity exclusively when the target anomalies are pre-determined in advance. 
When we employ these classical techniques on the anomalies detected by an AD algorithm, they become invalid in the sense that the FPR cannot be controlled at the desired level.

In order to control the FPR when using classical methods, the use of multiple testing correction is essential.
The most popular technique is Bonferroni correction, in which the correction factor scales exponentially with the number of instances, denoted as $n$. 
Specifically, it grows to a value of $2^n$.
However, this correction factor can become prohibitively large unless $n$ fairly small, which leads to overly conservative statistical inference.

In recent years, SI has emerged as a promising approach to resolve the invalidity of the traditional inference method without the need of employing conservative multiple testing correction. 
Specifically, instead of considering the exponentially increasing value of $2^n$, we  consider the correction factor of $1$ by conducting the \emph{conditional inference} conditioning on the single set of detected anomalies. 
This is the basic concept of the conditional SI introduced in the seminal work of  \cite{lee2016exact}.

The seminal paper has not only laid the foundation for research on SI for feature selection~\cite{loftus2014significance, fithian2015selective, tibshirani2016exact, yang2016selective, hyun2018exact, sugiyama2021more, fithian2014optimal, duy2021more} but has also spurred the development of SI for more complex supervised learning algorithms, such as boosting~\cite{rugamer2020inference}, decision trees~\cite{neufeld2022tree}, kernel methods~\cite{yamada2018post}, higher-order interaction models~\cite{suzumura2017selective,das2021fast} and deep neural networks~\cite{duy2022quantifying, miwa2023valid}.
Moreover, SI is also valuable for unsupervised learning problems, such as change point detection~\cite{umezu2017selective, hyun2018post, duy2020computing, sugiyama2021valid, jewell2022testing}, clustering~\cite{lee2015evaluating, inoue2017post, gao2022selective, chen2022selective}, and segmentation~\cite{tanizaki2020computing, duy2022quantifying}.
Furthermore, SI can be applied to statistical inference on  the DTW distance~\cite{duy2022exact} and the Wasserstein distance~\cite{duy2021exact}.

The studies most related to this paper are \cite{chen2019valid, tsukurimichi2022conditional}.
The authors of \cite{chen2019valid} introduced a  method for testing the features of a linear model after removing anomalies.
Although their work did not directly address the problem of testing the anomalies, it was the  inspiration for 
\cite{tsukurimichi2022conditional}.
The contribution of \cite{tsukurimichi2022conditional} centered on introducing a SI approach for testing the anomalies identified by a class of robust regression methods.
However, a notable limitation of \cite{chen2019valid} and \cite{tsukurimichi2022conditional} is their assumption that the data comes from the same distribution.
Therefore, when applied in the context of DA, their methods loses its validity and applicability, making them unsuitable for scenarios involving DA.
Besides, their primary focus revolves around the context of linear regression, which differs from the setting we consider in this paper. 

\section{Problem Setup} \label{sec:problem_setup}

Let us consider two random vectors
\begin{align*} 
	\bm X^s &= (X^s_1, ..., X^s_{n_s})^\top = \bm \mu^s  + \bm \veps^s, \quad \bm \veps^s \sim \NN(\bm 0, \Sigma^s), 
	\\ 
	\bm X^t &= (X^t_1, ..., X^t_{n_t})^\top = \bm \mu^t  + \bm \veps^t, \quad \bm \veps^t \sim \NN(\bm 0, \Sigma^t), 
\end{align*}
where $n_s$ and $n_t$ are the number of instances in the source and target domains, $\bm \mu^s$ and $\bm \mu^t$ are unknown signals, $\bm \veps^s$ and $\bm \veps^t$ are the Gaussian noise vectors with the covariance matrices $\Sigma^s$ and $\Sigma^t$ assumed to be known or estimable from independent data. 
We assume that the number of instances in the target domain is limited, i.e., $n_t$ is much smaller than $n_s$.
The goal is to statistically test the results of AD after DA.

\paragraph{Optimal Transport (OT)-based DA \cite{flamary2016optimal}.} 
Let us define the cost matrix as 
\begin{align*} 
	C(\bm X^s, \bm X^t) 
	& = \big[(X_i^s - X_j^t)^2 \big]_{ij} \in \RR^{n_s \times n_t}.
\end{align*}
The OT problem between the source and target domain is then defined as 
\begin{align} \label{eq:ot_problem}
		\hat{T} = \argmin \limits_{T \geq 0} & ~ \langle T, C(\bm X^s, \bm X^t)\rangle \\ 
		\text{s.t.} ~~& ~ T \bm{1}_{n_t} = \bm 1_{n_s}/{n_s},  T^\top \bm{1}_{n_s} = \bm 1_{n_t}/{n_t} \nonumber,
\end{align}
where $\bm{1}_n \in \RR^n$ is the vector whose elements are set to $1$.
After obtaining the optimal transportation matrix $\hat{T}$, source instances are transported in the target domain.
The transformation $\tilde{\bm X}^s$ of $\bm X^s$ is defined as:
\begin{align*}
	\tilde{\bm X}^s 
		= n_s \hat{T} \bm X^t.
\end{align*} 
More details are provided in Sec 3.3 of \cite{flamary2016optimal}. 

\paragraph{Anomaly detection.} After transforming the data from source domain to the target domain, we apply an AD algorithm $\cA$ on $\big \{ \tilde{\bm X}^s, \bm X^t \big \}$ to obtain a set $\cO$ of indices of anomalies in the target domain:
\begin{align*}
		\cA: \left \{ \tilde{\bm X}^s, \bm X^t \right \} 
		\mapsto
		\cO \in [n_t].
\end{align*}
In this paper, we used the Median Absolute Deviation (MAD) as an example of the AD algorithm. 
%
%
Our proposed CAD-DA framework is not specialized for a specific AD algorithm but can also be applied to other AD algorithms (see \S \ref{subsec:identification_cZ} for more details).

\paragraph{Statistical inference and decision making with a $p$-value.} To statistically quantifying the significance of the detected anomalies, we consider the statistical test on the following null and alternative hypotheses:
\begin{align*} 
	{\rm H}_{0, j}: \mu^t_j = \bar{\bm \mu}^t_{- \cO}
	\quad
	\text{vs.}
	\quad 
	{\rm H}_{1, j}: \mu^t_j \neq \bar{\bm \mu}^t_{- \cO}, \quad 
	\forall j \in \cO,
\end{align*}
where 
\[
	\bar{\bm \mu}^t_{- \cO} = 
	\frac{1}{n_t - |\cO|} \sum \limits_{\ell \in [n_t] \setminus \cO}
	\mu^t_\ell.
\]
In other words, our goal is to test if each of the detected anomalies $j \in \cO$ is truly deviated from the remaining data points after removing the set of anomalies $\cO$.

To test the hypotheses, the test statistic is defined as:
\begin{align}\label{eq:test_statistic}
	 T_j 
	 = X_j^t - \bar{\bm X}^t_{- \cO} 
	 = \bm \eta_j^\top {\bm X^s \choose \bm X^t }, 
\end{align}
where $\bm \eta_j$ is the direction of the test statistic: 
\begin{align} \label{eq:eta_j}
\bm \eta_j = 
\begin{pmatrix}
	\bm 0^{s} \\ 
	\bm e^{t}_j - \frac{1}{n_t - |\cO|}
	\bm e^{t}_{-\cO}
\end{pmatrix},
\end{align}
$\bm 0^{s} \in \RR^{n_s}$ represents a vector where all entries are set to 0, 
$\bm e^{t}_j \in \RR^{n_t}$ is a vector in which the $j^{\rm th}$ entry is set to $1$, and $0$ otherwise,
$\bm e^{t}_{-\cO} \in \RR^{n_t}$ is a vector in which the $j^{\rm th}$ entry is set to $0$ if $j \in \cO$, and $1$ otherwise.

After obtaining the test statistic in \eq{eq:test_statistic}, we need to compute a $p$-value.
Given a significance level $\alpha \in [0, 1]$, e.g., 0.05, we reject the null hypothesis ${\rm H}_{0, j}$ and assert that $X_j^t$ is an anomaly if the $p$-value $ \leq \alpha$.
Conversely, if the $p$-value $ > \alpha$, we infer that there is insufficient evidence to conclude that $X_j^t$ is an anomaly.

\paragraph{Challenge of computing a valid $p$-value.}
The traditional (naive) $p$-value, which does not properly consider the effect of DA and AD, is defined as:
\begin{align*}
	p^{\rm naive}_j = 
	\mathbb{P}_{\rm H_{0, j}} 
	\Bigg ( 
		\left | \bm \eta_j^\top {\bm X^s \choose \bm X^t } \right |
		\geq 
		\left | \bm \eta_j^\top {\bm X^s_{\rm obs} \choose \bm X^t_{\rm obs} } \right |
	\Bigg ), 
\end{align*}
where $\bm X^s_{\rm obs}$ and $\bm X^t_{\rm obs}$ are the observations of the random vectors $\bm X^s$ and $\bm X^t$, respectively.
If the vector $\bm \eta_j$ is independent of the DA and AD algorithms, the naive $p$-value is valid in the sense that 
\begin{align} \label{eq:valid_p_value}
	\mathbb{P} \Big (
	\underbrace{p_j^{\rm naive} \leq \alpha \mid {\rm H}_{0, j} \text{ is true }}_{\text{a false positive}}
	\Big) = \alpha, ~~ \forall \alpha \in [0, 1],
\end{align} 
i.e., the probability of obtaining a false positive is controlled under a certain level of guarantee.
However, in our setting, the vector $\bm \eta_j$ actually depends on the DA and AD.
The property of a valid $p$-value in \eq{eq:valid_p_value} is no longer satisfied.
Hence, the naive $p$-value is \emph{invalid}.

\section{Proposed CAD-DA Method} \label{sec:method}


In this section, we introduce the technical details of computing the valid $p$-value in CAD-DA.

\subsection{The valid $p$-value in CAD-DA} \label{subsec:valid_p_value}

To calculate the valid $p$-value, we need to derive the sampling distribution of the test statistic in \eq{eq:test_statistic}.
To achieve the task, we utilize the concept of conditional SI, i.e., we consider the sampling distribution of the test statistic  \emph{conditional} on the AD results after DA:
\begin{align} \label{eq:conditional_distribution}
	\mathbb{P} \Bigg ( 
	\bm \eta_j^\top {\bm X^s \choose \bm X^t }
	~
	\Big |
	~ 
	\cO_{\bm X^s, \bm X^t}
	=
	\cO_{\rm obs}
	\Bigg ),
\end{align}
where $\cO_{\bm X^s, \bm X^t}$ is the results of AD after DA \emph{for any random} $\bm X^s$ and $\bm X^t$, and $\cO_{\rm obs} = \cO_{\bm X^s_{\rm obs}, \bm X^t_{\rm obs}}$.

Next, we introduce the \emph{selective $p$-value} defined as
\begin{align} \label{eq:selective_p}
	p^{\rm sel}_j = 
	\mathbb{P}_{\rm H_{0, j}} 
	\Bigg ( 
		\left | \bm \eta_j^\top {\bm X^s \choose \bm X^t } \right |
		\geq 
		\left | \bm \eta_j^\top {\bm X^s_{\rm obs} \choose \bm X^t_{\rm obs} } \right |
		~
		\Bigg | 
		~
		\cE
	\Bigg ), 
\end{align}
where the conditioning event $\cE$ is defined as
\begin{align} \label{eq:conditioning_event}
\cE = \Big \{ 
	\cO_{\bm X^s, \bm X^t}
	=
	\cO_{\rm obs},
	\cQ_{\bm X^s, \bm X^t}
	=
	\cQ_{\rm obs}
\Big \}. 
\end{align}
The $\cQ_{\bm X^s, \bm X^t}$ is the \emph{nuisance component} defined as 
\begin{align} \label{eq:q_and_b}
	\cQ_{\bm X^s, \bm X^t} = 
	\left ( 
	I_{n_s + n_t} - 
	\bm b
	\bm \eta_j^\top \right ) 
	{\bm X^s \choose \bm X^t},
\end{align}
where 
$
	\bm b = \frac{\Sigma \bm \eta_j}
	{\bm \eta_j^\top \Sigma \bm \eta_j}
$
and 
$
\Sigma = 
\begin{pmatrix}
	\Sigma^s & 0 \\ 
	0 & \Sigma^t
\end{pmatrix}.
$

\begin{remark}
The nuisance component $\cQ_{\bm X^s, \bm X^t}$ corresponds to
the component $\bm z$ in the seminal paper of \cite{lee2016exact} (see Sec. 5, Eq. (5.2), and Theorem 5.2)).
We note that additionally conditioning on $\cQ_{\bm X^s, \bm X^t}$, which is required for technical purpose, is a standard approach in the SI literature and it is used in almost all the SI-related works that we cited.
%
\end{remark}

\begin{lemma} \label{lemma:valid_selective_p}
The selective $p$-value proposed in \eq{eq:selective_p} satisfies the property of a valid $p$-value:
\begin{align*}
	\mathbb{P}_{\rm H_{0, j}}  \Big (
	p_j^{\rm sel} \leq \alpha
	\Big) = \alpha, ~~ \forall \alpha \in [0, 1].
\end{align*} 
\end{lemma}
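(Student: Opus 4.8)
The plan is to reduce the claim to the classical pivotal-quantity argument for conditional selective inference, in which the selective $p$-value is exhibited as a probability-integral transform of a one-dimensional truncated Gaussian. First I would note that conditioning on $\cO_{\bm X^s, \bm X^t} = \cO_{\rm obs}$ fixes the detected set, and hence fixes the direction $\bm \eta_j$ in \eq{eq:eta_j} to a deterministic vector; this is precisely why the event $\cO_{\bm X^s, \bm X^t} = \cO_{\rm obs}$ is included in $\cE$. Writing $\bm \mu = {\bm \mu^s \choose \bm \mu^t}$, a direct computation gives $\bm \eta_j^\top \bm \mu = \mu_j^t - \bar{\bm \mu}^t_{-\cO}$, which vanishes under ${\rm H}_{0,j}$. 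Consequently, the test statistic $T_j = \bm \eta_j^\top {\bm X^s \choose \bm X^t}$ is, marginally, a centered Gaussian $\NN(0, \bm \eta_j^\top \Sigma \bm \eta_j)$.

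Next I would exploit the decomposition implicit in \eq{eq:q_and_b}. Since $\binom{\bm X^s}{\bm X^t} = \cQ_{\bm X^s, \bm X^t} + \bm b\, T_j$ with $\bm b = \Sigma \bm \eta_j / (\bm \eta_j^\top \Sigma \bm \eta_j)$, and since $T_j$ and $\cQ_{\bm X^s, \bm X^t}$ are jointly Gaussian with $\mathrm{Cov}(T_j, \cQ_{\bm X^s, \bm X^t}) = \bm 0$ by the very choice of $\bm b$, the test statistic is independent of the nuisance component. The key structural observation is that the entire selection pipeline (OT-based DA followed by the AD algorithm) depends on the data only through $\binom{\bm X^s}{\bm X^t}$, so once $\cQ_{\bm X^s, \bm X^t} = \cQ_{\rm obs}$ is fixed, the data is constrained to the line $\{\cQ_{\rm obs} + \bm b\, z : z \in \RR\}$. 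Therefore the event $\cO_{\bm X^s, \bm X^t} = \cO_{\rm obs}$ becomes a deterministic subset $\mathcal{Z} \subseteq \RR$ of admissible values of $z = T_j$, and the conditional law of $T_j$ given $\cE$ is exactly the Gaussian $\NN(0, \bm \eta_j^\top \Sigma \bm \eta_j)$ truncated to $\mathcal{Z}$.

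Finally, I would apply the probability-integral transform. Letting $F$ denote the CDF of this truncated Gaussian, the two-sided selective $p$-value in \eq{eq:selective_p} can be written as an explicit function of $F(T_j)$; since $F(T_j) \mid \cE \sim \mathrm{Unif}(0,1)$ under ${\rm H}_{0,j}$ and this function maps $\mathrm{Unif}(0,1)$ to $\mathrm{Unif}(0,1)$, we obtain $\mathbb{P}_{{\rm H}_{0,j}}(p_j^{\rm sel} \leq \alpha \mid \cE) = \alpha$ for every $\alpha \in [0,1]$. Because this identity holds for every realization of the conditioning event, marginalizing over $\cE$ (i.e., over $\cO_{\rm obs}$ and $\cQ_{\rm obs}$) preserves it via the law of total expectation, yielding $\mathbb{P}_{{\rm H}_{0,j}}(p_j^{\rm sel} \leq \alpha) = \alpha$. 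I expect the main obstacle to be rigorously justifying that conditioning on $\cQ_{\bm X^s, \bm X^t} = \cQ_{\rm obs}$ collapses the high-dimensional selection event into a one-dimensional truncation region depending only on $z = T_j$ — that is, that the DA and AD maps restricted to the line partition $\RR$ into pieces on which the detected set $\cO$ is constant; characterizing this region $\mathcal{Z}$ explicitly is what makes the $p$-value computable and is deferred to the subsequent sections.
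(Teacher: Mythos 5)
Your proposal is correct and follows essentially the same route as the paper's proof: establish that conditional on $\cE$ the test statistic is a truncated Gaussian on $\cZ$, invoke the probability-integral transform to get $\mathbb{P}_{{\rm H}_{0,j}}(p_j^{\rm sel}\leq\alpha\mid\cE)=\alpha$, and then integrate out $\cQ_{\rm obs}$ and sum over $\cO_{\rm obs}$ by the tower property. The only difference is cosmetic: you spell out the orthogonal decomposition and the independence of $T_j$ and $\cQ_{\bm X^s,\bm X^t}$ that justify the truncated-normal claim, which the paper simply asserts.
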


\begin{proof}
The proof is deferred to Appendix \ref{appx:proof_valid_selective_p}.
\end{proof}

Lemma \ref{lemma:valid_selective_p} indicates that, by using the selective $p$-value, the FPR is theoretically controlled at the significance level $\alpha$.
To compute the selective $p$-value, we need to identify the conditioning event $\cE$ in \eq{eq:conditioning_event} whose characterization will be introduced in the next section.


\begin{figure*}[!t]
\begin{minipage}{0.71\textwidth}
\centering
\includegraphics[width=\linewidth]{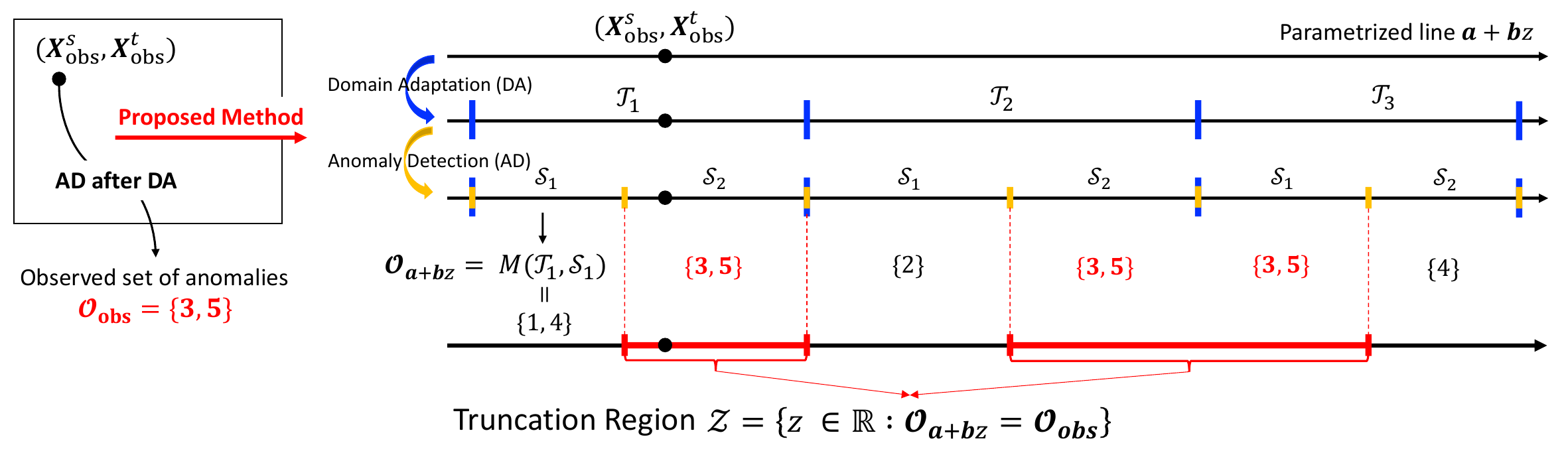} 
\caption{
\footnotesize 
A schematic illustration of the proposed method.
By applying AD after DA to the observed data, we obtain a set of anomalies.
Then, we parametrize the data with a scalar parameter $z$ in the dimension of the test statistic to identify the truncation region $\cZ$ whose data have the \emph{same} result of anomaly detection as the observed data.
Finally, the valid inference is conducted conditional on $\cZ$.
We utilize the concept of ``divide-and-conquer'' and introduce a hierarchical line search method for efficiently characterizing the truncation region $\cZ$.}
\label{fig:schematic_illustration}
\end{minipage}
\hspace{2pt}
\begin{minipage}{0.28\textwidth}
\centering
\includegraphics[width=.95\linewidth]{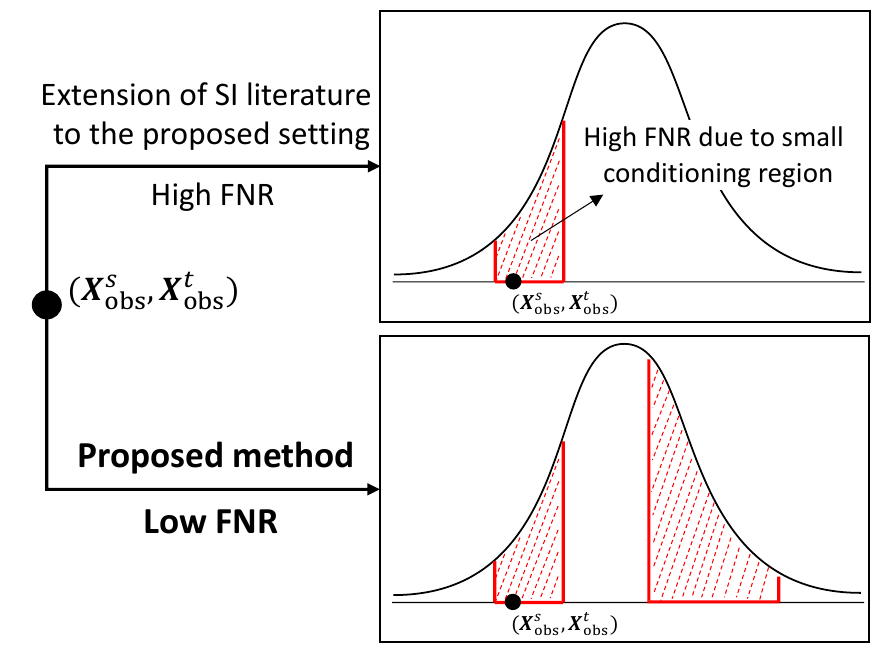} 
\caption{
\footnotesize 
If we leverage the idea of existing SI literature and apply to our setting, the FNR will be high due to small conditioning region by extra-conditioning. In contrast, the FNR is minimized with the proposed method.}
\label{fig:truncated_distributions}
\end{minipage}
\vspace{-8pt}
\end{figure*}

%
%
%

\subsection{Conditional Data Space Characterization} \label{subsec:conditional_data_space}

We define the set of ${\bm X^s \choose \bm X^t } \in \RR^{n_s + n_t}$ that satisfies the conditions in \eq{eq:conditioning_event} as:
\begin{align} \label{eq:conditional_data_space}
	\hspace{-2.5mm}\cD = \left \{ 
	{\bm X^s \choose \bm X^t } \Big | ~
	\cO_{\bm X^s, \bm X^t}
	=
	\cO_{\rm obs}, 
	\cQ_{\bm X^s, \bm X^t}
	=
	\cQ_{\rm obs}
	\right \}. 
\end{align}
In fact, the conditional data space $\cD$ is \emph{a line} in $\RR^{n_s + n_t}$ as stated in the following lemma. 
\begin{lemma} \label{lemma:data_line}
The set $\cD$ in \eq{eq:conditional_data_space} can be rewritten using a scalar parameter $z \in \RR$ as follows:
\begin{align} \label{eq:conditional_data_space_line}
	\cD = \left \{ {\bm X^s \choose \bm X^t } = \bm a + \bm b z \mid z \in \cZ \right \},
\end{align}
where vector $\bm a = \cQ_{\rm obs}$, $\bm b$ is defined in \eq{eq:q_and_b}, and
\begin{align} \label{eq:cZ}
	\cZ = \Big \{ 
	z \in \RR 
	\mid 
	\cO_{\bm a + \bm b z} = \cO_{\rm obs}
	\Big \}.
\end{align}
Here, with a slight abuse of notation, 
$
\cO_{\bm a + \bm b z} = \cO_{{\bm X^s \choose \bm X^t }}
$
is equivalent to $\cO_{\bm X^s, \bm X^t}$.
\end{lemma}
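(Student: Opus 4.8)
The plan is to show that the two defining conditions of $\cD$ decouple: the nuisance condition alone pins the data to a line, and the anomaly condition then carves out the admissible segment $\cZ$ of that line. Write $\bm X = {\bm X^s \choose \bm X^t}$ for brevity. The whole argument rests on one algebraic identity, $\bm\eta_j^\top \bm b = 1$, which is immediate from the definition $\bm b = \Sigma\bm\eta_j/(\bm\eta_j^\top \Sigma \bm\eta_j)$ in \eqref{eq:q_and_b}.

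First I would rearrange the definition of the nuisance component into the decomposition $\bm X = \cQ_{\bm X^s, \bm X^t} + \bm b\,(\bm\eta_j^\top \bm X)$, which is nothing but $\cQ_{\bm X^s, \bm X^t} = (I_{n_s+n_t} - \bm b\bm\eta_j^\top)\bm X = \bm X - \bm b(\bm\eta_j^\top \bm X)$. Setting $z := \bm\eta_j^\top \bm X$ and conditioning on $\cQ_{\bm X^s, \bm X^t} = \cQ_{\rm obs} =: \bm a$, this identity becomes $\bm X = \bm a + \bm b z$. Hence every point of $\cD$ lies on the line $\{\bm a + \bm b z : z \in \RR\}$, giving one inclusion.

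Next I would verify the self-consistency of the parametrization, namely that sliding $z$ along the line leaves the nuisance component fixed at $\bm a$, so the nuisance constraint is satisfied by the entire line rather than a single point. This reduces to two facts about $P := I_{n_s+n_t} - \bm b\bm\eta_j^\top$: using $\bm\eta_j^\top \bm b = 1$ one gets $P\bm b = \bm b - \bm b(\bm\eta_j^\top \bm b) = \bm 0$ and $P^2 = I_{n_s+n_t} - 2\bm b\bm\eta_j^\top + \bm b(\bm\eta_j^\top \bm b)\bm\eta_j^\top = P$, i.e. $P$ is the (oblique) projection along $\bm b$. The first fact gives $\cQ_{\bm a + \bm b z} = P(\bm a + \bm b z) = P\bm a$ independent of $z$, and the second, together with $\bm a = P\bm X_{\rm obs}$, gives $P\bm a = P^2\bm X_{\rm obs} = P\bm X_{\rm obs} = \bm a = \cQ_{\rm obs}$. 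Thus the nuisance condition holds identically on the line and imposes no further restriction on $z$, yielding the reverse inclusion.

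Finally, with the nuisance constraint fully absorbed into the line, the remaining condition $\cO_{\bm X^s, \bm X^t} = \cO_{\rm obs}$ becomes a condition on the scalar $z$ alone, $\cO_{\bm a + \bm b z} = \cO_{\rm obs}$, which is exactly the definition of $\cZ$ in \eqref{eq:cZ}; intersecting the line with $\cZ$ gives the stated form of $\cD$. I expect the only delicate point to be the self-consistency step: one must confirm that $\bm X \mapsto \cQ_{\bm X^s, \bm X^t}$ is genuinely a projection along $\bm b$, so that conditioning on its value is equivalent to restricting to a one-dimensional affine set and nothing smaller. The anomaly condition plays no role in this dimensionality argument; its explicit characterization through the linear/quadratic structure of OT-based DA and MAD is deferred to the next section.
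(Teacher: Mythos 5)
Your proof is correct and follows essentially the same route as the paper's: rearranging $\cQ_{\bm X^s, \bm X^t} = (I - \bm b\bm\eta_j^\top)\bm X = \cQ_{\rm obs}$ into $\bm X = \bm a + \bm b z$ with $z = \bm\eta_j^\top \bm X$, then letting the first condition define $\cZ$. Your additional verification of the reverse inclusion---using $\bm\eta_j^\top\bm b = 1$ to show $P\bm b = \bm 0$ and $P^2 = P$, so the nuisance constraint holds along the entire line---is a welcome bit of rigor that the paper's terse proof leaves implicit, but it is not a different argument.
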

\begin{proof}
The proof is deferred to Appendix \ref{appx:proof_lemma_data_line}.
\end{proof}

\begin{remark}
Lemma \ref{lemma:data_line} indicates that we need NOT consider the $(n_s + n_t)$-dimensional data space.
Instead, we need only consider the \emph{one-dimensional projected} data space $\cZ$ in \eq{eq:cZ}.
The fact of restricting the data to a line has been already implicitly exploited in \cite{lee2016exact}, and explicitly discussed in Sec. 6 of \cite{liu2018more}.
\end{remark}

\paragraph{Reformulation of the selective $p$-value.}
Let us consider a random variable and its observation:
\begin{align*}
	Z = \bm \eta_j^\top {\bm X^s \choose \bm X^t } \in \RR 
	~~ \text{and} ~~ 
	Z_{\rm obs} = \bm \eta_j^\top {\bm X^s_{\rm obs} \choose \bm X^t_{\rm obs} } \in \RR.
\end{align*}
%
%
%
Then, the selective $p$-value in (\ref{eq:selective_p}) can be rewritten as 
\begin{align} \label{eq:selective_p_reformulated}
	p^{\rm sel}_j = \mathbb{P}_{\rm H_{0, j}} \Big ( |Z| \geq |Z_{\rm obs}| \mid  Z \in \cZ \Big ).
\end{align}
Because $Z \sim \NN(0, \bm \eta_j^\top \Sigma \bm \eta_j)$ under the null hypothesis, $Z \mid Z \in \cZ$ follows a \emph{truncated} normal distribution. 
Once the truncation region $\cZ$ is identified, computation of the selective $p$-value in (\ref{eq:selective_p_reformulated}) is straightforward.
Therefore, the remaining task is to identify $\cZ$.


\subsection{Identification of Truncation Region $\cZ$} \label{subsec:identification_cZ}

As discussed in \S \ref{subsec:conditional_data_space}, to calculate the selective $p$-value \eq{eq:selective_p_reformulated}, we must identify the truncation region $\cZ$ in (\ref{eq:cZ}).
%
%
However, there is no direct way to identify $\cZ$.
To overcome the difficulty, we utilize the concept the ``\emph{divide-and-conquer}'' and introduce an approach (illustrated in Fig. \ref{fig:schematic_illustration}) to efficiently identify $\cZ$, described as follows:

$\bullet$ \textbf{Step 1 (extra-conditioning)}: breaking down the problem into multiple sub-problems by additionally conditioning the transportation for DA and all the steps of the AD after DA. Each sub-problem can be directly solved with a finite number of operations.

$\bullet$ \textbf{Step 2 (hierarchical line search)}: hierarchically combining multiple extra-conditioning steps and check the condition $\cO_{\bm a + \bm b z} = \cO_{\rm obs}$ to obtain $\cZ$.

Specifically, let us denote by $U$ a number of all possible transportations for DA along the parametrized line.
%
%
We denote by $V^u$ a number of all possible sets of steps performed by the AD algorithm after the $\cT_u$ transportation, $u \in [U]$.
The entire one-dimensional space $\RR$ can be decomposed as: 
\begin{align*}
	\RR
	& = 
	\bigcup \limits_{u \in [U]}
	\bigcup \limits_{v \in [V^u]}
	\Bigg \{ 
	\underbrace{
	z \in \RR 
	\mid 
	\cT_{\bm a + \bm b z} = \cT_u,
	\cS_{\bm a + \bm b z} = \cS_v}_{\text{a sub-problem of extra-conditioning}}
	\Bigg \},
\end{align*}
where $\cT_{\bm a + \bm b z}$ denotes the OT-based DA on $\bm a + \bm b z$, $\cS_{\bm a + \bm b z}$ denote a set of steps performed by the AD algorithm after DA.
Our goal is to search a set 
\begin{align}\label{eq:cW}
	\cW = 
	\Big \{ 
		(u, v) : \cM(\cT_u, \cS_v) = \cO_{\rm obs},
	\Big \}, 
\end{align}
for all $u \in [U], v \in [V^u]$, the function $\cM$ is defined as:
\[
\cM : \big (\cT_{\bm a + \bm b z}, \cS_{\bm a + \bm b z} \big ) \mapsto \cO_{\bm a + \bm b z}.
\]
Finally, the region $\cZ$ in \eq{eq:cZ} can be obtained as follows:
\begin{align}
	\hspace{-2mm}\cZ 
	& = \Big \{ 
	z \in \RR 
	\mid 
	\cO_{\bm a + \bm b z} = \cO_{\rm obs}
	\Big \} \nonumber \\ 
	& = 
	\bigcup \limits_{(u, v) \in \cW}
	\Big \{ 
	z \in \RR 
	\mid 
	\cT_{\bm a + \bm b z} = \cT_u,
	\cS_{\bm a + \bm b z} = \cS_v
	\Big \}. \label{eq:cZ_new}
\end{align}
%
%

\begin{algorithm}[!t]
\renewcommand{\algorithmicrequire}{\textbf{Input:}}
\renewcommand{\algorithmicensure}{\textbf{Output:}}
\begin{footnotesize}
 \begin{algorithmic}[1]
  \REQUIRE $\bm a, \bm b, z_{\rm min}, z_{\rm max}$
	\vspace{4pt}
	\STATE Initialization: $u = 1$, $v = 1$, $z_{u, v}  = z_{\rm min}$, $\cW = \emptyset$
	\vspace{4pt}
	\WHILE {$z_{u, v} < z_{\rm max}$}
		\vspace{4pt}
		\STATE $\cT_u \leftarrow$ DA on data $\bm a + \bm b z_{u, v}$
		\vspace{4pt}
		\STATE Compute $[L_u, R_u] = \cZ_u \leftarrow$ Lemma \ref{lemma:cZ_1} 
		\vspace{4pt}
		\WHILE {\emph{true}}
		\vspace{4pt}
		\STATE $\cS_v \leftarrow$ AD after DA on data $\bm a + \bm b z_{u, v}$
		\vspace{4pt}
		\STATE Compute $\cZ_v \leftarrow$ Lemma \ref{lemma:cZ_2} 
		\vspace{4pt}
		\STATE $[L_{u, v}, R_{u, v}] = \cZ_{u, v} \leftarrow \cZ_u \cap \cZ_v$ 
		\vspace{4pt}
		\IF {$\cM(\cT_u, \cS_v) = \cO_{\rm obs}$}
		\vspace{4pt}
		\STATE $\cW \leftarrow \cW \cup \{ (u, v)\} $
		\vspace{4pt}
		\ENDIF
		\vspace{4pt}
		\IF {$R_{u, v} = R_u$}
		\vspace{4pt}
		\STATE $v \leftarrow 1$, $u \leftarrow u + 1$, $z_{u, v} = R_{u, v}$, \textbf{break}
		\vspace{4pt}
		\ENDIF
		%
		%
		\vspace{4pt}
		\STATE $v \leftarrow v + 1$, $z_{u, v} = R_{u, v}$
		\vspace{4pt}
		\ENDWHILE	
		\vspace{4pt}
	\ENDWHILE
	\vspace{4pt}
  \ENSURE $\cW$ 
 \end{algorithmic}
\end{footnotesize}
\caption{{\tt hierachical\_line\_search}}
\label{alg:hierachical_line_search}
\end{algorithm}

\textbf{Extra-conditioning (step 1).} For any $u \in [U]$ and $v \in [V^u]$, we define the subset of one-dimensional projected dataset on a line for the extra-conditioning as:
\begin{align} \label{eq:cZ_extra_condition}
	\cZ_{u, v} = 
	\big \{z \in \RR 
	\mid 
	\cT_{\bm a + \bm b z} = \cT_u,
	\cS_{\bm a + \bm b z} = \cS_v 
	\big \}.
\end{align}
The extra-conditioning region can be re-written as:
\begin{align*}
	&\cZ_{u, v}  = \cZ_u \cap \cZ_v, \\ 
	\cZ_u  = 
	\big \{ 
	z
	\mid 
	\cT_{\bm a + \bm b z} &= \cT_u
	\big \},  ~ ~
	\cZ_v = 
	\big \{ 
	z 
	\mid 
	\cS_{\bm a + \bm b z} = \cS_v
	\big \}.
\end{align*}
\begin{lemma} \label{lemma:cZ_1}
The set $\cZ_u$ can be characterized by a set of quadratic inequalities w.r.t. $z$ described as follows:
\begin{align*}
	\cZ_u
	= \Big \{ 
	z \in \RR 
	\mid 
	\bm w + \bm r z + \bm o z^2 \geq \bm 0
	\Big \},
\end{align*}
where vectors $\bm w$, $\bm r$, and $\bm o$ are defined in Appendix \ref{appx:proof_lemma_cZ_1}.
\end{lemma}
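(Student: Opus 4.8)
The plan is to exploit the fact that the OT-based DA in \eqref{eq:ot_problem} is a \emph{linear program} whose feasible region and right-hand side do not depend on the data, while the entries of the cost matrix are \emph{quadratic} polynomials in $z$ once the data is restricted to the line $\bm a + \bm b z$. First I would vectorize the transportation matrix, writing $\bm t = \mathrm{vec}(T) \in \RR^{n_s n_t}$ and $\bm c(z) = \mathrm{vec}\big(C(\bm a + \bm b z)\big)$, so that \eqref{eq:ot_problem} becomes $\min_{\bm t \ge \bm 0}\ \bm c(z)^\top \bm t$ subject to $S \bm t = \bm d$, where $S$ encodes the two marginal constraints $T \bm 1_{n_t} = \bm 1_{n_s}/n_s$ and $T^\top \bm 1_{n_s} = \bm 1_{n_t}/n_t$ and $\bm d$ collects their right-hand sides. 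Crucially, $S$ and $\bm d$ are \emph{independent of} $z$; only the objective $\bm c(z)$ varies along the line.

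Next I would show that $\bm c(z)$ is quadratic in $z$. Writing the line componentwise as $X^s_i(z) = a^s_i + b^s_i z$ and $X^t_j(z) = a^t_j + b^t_j z$, each cost entry is
\begin{align*}
	C_{ij}(z) = \big(X^s_i(z) - X^t_j(z)\big)^2
	= \big((a^s_i - a^t_j) + (b^s_i - b^t_j) z\big)^2,
\end{align*}
which is a quadratic polynomial in $z$. Collecting coefficients gives $\bm c(z) = \bm \kappa_0 + \bm \kappa_1 z + \bm \kappa_2 z^2$ for fixed vectors $\bm \kappa_0, \bm \kappa_1, \bm \kappa_2 \in \RR^{n_s n_t}$ determined by $\bm a$ and $\bm b$.

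Then I would invoke linear-programming optimality. The selected transportation $\cT_u$ corresponds to an optimal basis $\mathcal{B}$ with non-basic index set $\mathcal{N}$ (after dropping the single redundant marginal constraint so that $S_{\mathcal{B}}$ is invertible); the basic solution $\bm t_{\mathcal{B}} = S_{\mathcal{B}}^{-1} \bm d$ is constant in $z$ because $S$ and $\bm d$ are, so $\hat T$ stays equal to $\cT_u$ exactly on the set of $z$ for which $\mathcal{B}$ remains optimal. By the reduced-cost optimality criterion for a minimization LP, $\mathcal{B}$ is optimal iff all reduced costs are nonnegative, i.e.
\begin{align*}
	\bar{\bm c}_{\mathcal{N}}(z)
	= \bm c_{\mathcal{N}}(z) - S_{\mathcal{N}}^\top \big(S_{\mathcal{B}}^{-1}\big)^\top \bm c_{\mathcal{B}}(z) \ge \bm 0.
\end{align*}
Since the reduced-cost map $\bm c \mapsto \bm c_{\mathcal{N}} - S_{\mathcal{N}}^\top (S_{\mathcal{B}}^{-1})^\top \bm c_{\mathcal{B}}$ is linear, substituting $\bm c(z) = \bm \kappa_0 + \bm \kappa_1 z + \bm \kappa_2 z^2$ yields $\bar{\bm c}_{\mathcal{N}}(z) = \bm w + \bm r z + \bm o z^2$, where $\bm w$, $\bm r$, $\bm o$ are the images of $\bm \kappa_0$, $\bm \kappa_1$, $\bm \kappa_2$ under this map; these are precisely the vectors specified in Appendix~\ref{appx:proof_lemma_cZ_1}. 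Hence $\cZ_u = \{ z \in \RR \mid \bm w + \bm r z + \bm o z^2 \ge \bm 0 \}$.

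The main obstacle is the equivalence between the event $\cT_{\bm a + \bm b z} = \cT_u$ and the basis-optimality event, since passing from ``$\hat T$ equals the specific matrix $\cT_u$'' back to ``$\mathcal{B}$ is optimal'' requires care when the optimal solution is non-unique or the basis is degenerate. I would resolve this within the extra-conditioning philosophy of Step~1 by conditioning on the basis actually returned by the solver, so that $\cT_u$ is identified with $\mathcal{B}$ by construction and the reduced-cost inequalities characterize exactly the $z$ producing the same basic optimal transportation; any over-conditioning introduced here is later recombined by the hierarchical line search in Step~2. Once this identification is in place, the quadratic form of $\cZ_u$ is immediate from the two observations above.
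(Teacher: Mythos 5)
Your proposal is correct and follows essentially the same route as the paper's proof in Appendix \ref{appx:proof_lemma_cZ_1}: vectorize the OT problem as a parametric linear program with data-independent constraints, observe that the cost vector is quadratic in $z$ along the line, and characterize $\cZ_u$ via the nonnegativity of the reduced (relative) costs of the non-basic variables, which are a linear image of the cost vector and hence quadratic in $z$. Your explicit handling of the identification between the transportation $\cT_u$ and the optimal basis (degeneracy, the redundant marginal constraint) is a point the paper passes over with a one-line remark, but it does not change the argument.
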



\begin{lemma} \label{lemma:cZ_2}
The set $\cZ_v$, which represents the set of operation performed by the MAD algorithm, can be characterized by a set of linear inequalities:
\begin{align} \label{eq:set_cZ_v}
	\cZ_v
	= \Big \{ 
	z \in \RR 
	\mid 
	\bm p z \leq \bm q
	\Big \},
\end{align}
where vectors $\bm p$ and $\bm q$ are provided in Appendix \ref{appx:proof_lemma_cZ_2}.

\end{lemma}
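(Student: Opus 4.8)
The plan is to show that, once the entire sequence of combinatorial choices made by the MAD algorithm is fixed (this is exactly what the label $\cS_v$ records), each such choice amounts to a comparison between two \emph{affine} functions of $z$, so that $\cZ_v$ is the intersection of finitely many half-lines, i.e. a system of the form $\bm p z \le \bm q$. First I would observe that along the parametrized line the input fed to MAD is affine in $z$: using the transport plan fixed in the $u$-th sub-problem, $\tilde{\bm X}^s(z) = n_s \cT_u \bm X^t(z)$ and $\bm X^t(z)$ are both affine, so every coordinate of the stacked target-domain data can be written as $X_i(z) = a'_i + b'_i z$ for constants $a'_i, b'_i$ determined by $\bm a, \bm b, \cT_u$. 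I would then decompose MAD into its elementary steps: (i) computing the median $m(z)$ of the data; (ii) forming the absolute deviations $d_i(z) = |X_i(z) - m(z)|$; (iii) computing the median $\mathrm{MAD}(z)$ of the $d_i(z)$; and (iv) flagging index $i$ as an anomaly when $d_i(z)$ exceeds a threshold proportional to $\mathrm{MAD}(z)$.

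The key observation is that each step, conditioned on the discrete outcomes stored in $\cS_v$, is affine in $z$ and is \emph{selected} by linear inequalities. For step (i), declaring a particular coordinate to be the median is equivalent to fixing which coordinates fall below and which above it, a conjunction of pairwise comparisons $X_i(z) \le X_j(z)$, i.e. $(b'_i - b'_j) z \le a'_j - a'_i$, which is linear in $z$; the selected $m(z)$ is then affine. For step (ii), the sign $s_i \in \{\pm 1\}$ of $X_i(z) - m(z)$ is pinned down by the linear inequality $s_i \big(X_i(z) - m(z)\big) \ge 0$, and once $\cS_v$ records these signs we may replace the absolute value by the affine expression $d_i(z) = s_i \big(X_i(z) - m(z)\big)$. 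Step (iii) is then identical to step (i) applied to the now-affine $d_i(z)$, again a conjunction of linear comparisons selecting an affine $\mathrm{MAD}(z)$. Finally, step (iv) conditions on which indices satisfy $d_i(z) > \lambda\,\mathrm{MAD}(z)$ for the fixed threshold constant $\lambda$, and since both sides are affine this comparison is once more linear in $z$. Stacking all of these inequalities row-wise yields the claimed form $\bm p z \le \bm q$, with $\bm p$ and $\bm q$ collecting the coefficients and constants produced above.

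I expect the main obstacle to be the absolute value in step (ii): $d_i(z) = |X_i(z) - m(z)|$ is only piecewise-linear in $z$, so without care the median-of-deviations and the subsequent thresholding would \emph{not} be linear. The resolution is precisely why the sign pattern must be part of the conditioning set $\cS_v$: on the sub-interval where the signs are fixed, each $d_i(z)$ collapses to a single affine branch, after which all remaining comparisons in steps (iii)--(iv) are genuinely linear. A secondary technical point is the handling of ties and of the even/odd-$n$ cases in the two median computations (where a median is an average of two order statistics); this keeps $m(z)$ and $\mathrm{MAD}(z)$ affine and only introduces a bounded number of additional linear constraints, so it does not affect the conclusion. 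Collecting the constraints from all four steps and reading off $\bm p, \bm q$ then completes the argument; the explicit entries are what I would defer to Appendix~\ref{appx:proof_lemma_cZ_2}.
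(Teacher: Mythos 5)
Your proposal is correct and follows essentially the same route as the paper's proof: the paper likewise fixes the transport plan (via a matrix $\Theta$ so the MAD input is affine in $z$), characterizes each median by pairwise linear comparisons, conditions on the signs of the deviations to linearize the absolute values, expresses the thresholding step as linear inequalities, and stacks everything into $\bm p z \leq \bm q$. Your identification of the sign-conditioning as the key step for handling $|X_i(z)-m(z)|$ is exactly the role played by the set $\cS_v^{2a}$ in the paper's argument.
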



The proofs of Lemmas \ref{lemma:cZ_1} and \ref{lemma:cZ_2} are deferred to Appendices \ref{appx:proof_lemma_cZ_1} and \ref{appx:proof_lemma_cZ_2}.
Since the definitions of $\bm w$, $\bm r$, $\bm o$, $\bm p$, and $\bm q$ are complex and require extensive descriptions, we deferred them to the Appendices.
Briefly speaking, they are used for ensuring that the transportation and all the steps of the AD after DA remains the same for all $z \in \cZ_{u, v}$.
Lemmas \ref{lemma:cZ_1} and \ref{lemma:cZ_2} indicate that $\cZ_u$ and $\cZ_v$ can be \emph{analytically obtained} by solving the systems of quadratic and linear inequalities, respectively.
After computing $\cZ_u$ and $\cZ_v$, the extra conditioning region $\cZ_{u, v}$ in \eq{eq:cZ_extra_condition} is obtained by $\cZ_{u, v}  = \cZ_u \cap \cZ_v$.
As mentioned in \S \ref{sec:problem_setup}, our proposed CAD-AD can be applied to other AD algorithms whose operations can be characterized by sets of linear/quadratic inequalities (e.g., least absolute deviations, Huber regression).

\begin{remark}
The selective $p$-value computed with the extra-conditioning region $\cZ_{u, v}$ is still valid and this fact is well-known in the literature of conditional SI.
The computation of $\cZ_{u, v}$ can be considered as an extension of the methods presented in \cite{lee2016exact} and \cite{duy2021exact} into our proposed setting.
However, the major drawback of this case is that the TPR is low, i.e., the FNR is high.
Therefore, we introduce the line search step to remove the extra-conditioning for the purpose of minimizing the FNR.
The illustration is shown in Fig. \ref{fig:truncated_distributions}.
 
\end{remark}

\textbf{Hierarchical line search (step 2).}
Our strategy is to identify $\cW$ in \eq{eq:cW} by repeatedly applying OT-based DA and MAD after DA  to a sequence of datasets $\bm a + \bm b z$ within sufficiently wide range of $z \in [z_{\rm min}, z_{\rm max}]$\footnote{We set $z_{\rm min} = -20\sigma$ and $z_{\rm max} = 20 \sigma$, $\sigma$ is the standard deviation of the distribution of the test statistic, because the probability mass outside this range is negligibly small.}.
For simplicity, we consider the case in which $\cZ_u$ is an interval \footnote{If $\cZ_u$ is a union of intervals, we can select the interval containing the data point that we are currently considering.}. 
Since $\cZ_v$ is also an interval, $\cZ_{u, v}$ is an interval.
We denote $\cZ_u = [L_u, R_u]$ and $\cZ_{u, v} = [L_{u, v}, R_{u, v}]$.
The hierarchical line search procedure can be summarized in Algorithm \ref{alg:hierachical_line_search}.
After obtaining $\cW$ by Algorithm \ref{alg:hierachical_line_search}.
We can compute $\cZ$ in \eq{eq:cZ_new}, which is subsequently used to obtain the proposed selective $p$-value in \eq{eq:selective_p_reformulated}.
The entire steps of the proposed CAD-DA method is summarized in Algorithm \ref{alg:cad_da}.

\begin{algorithm}[!t]
\renewcommand{\algorithmicrequire}{\textbf{Input:}}
\renewcommand{\algorithmicensure}{\textbf{Output:}}
\begin{footnotesize}
 \begin{algorithmic}[1]
  \REQUIRE $\bm X^s_{\rm obs}, \bm X^t_{\rm obs}, z_{\rm min}, z_{\rm max}$
	\vspace{4pt}
	\STATE $\cO_{\rm obs} \leftarrow$ AD after DA on $\big \{ \bm X^s_{\rm obs}, \bm X^t_{\rm obs} \big \} $
	\vspace{4pt}
	\FOR {$j \in \cO_{\rm obs}$}
		\vspace{4pt}
		\STATE Compute $\bm \eta_j \leftarrow$ Eq. \eq{eq:eta_j}, $\bm a$ and $\bm b \leftarrow$ Eq. \eq{eq:conditional_data_space_line}
		\vspace{4pt}
		\STATE $\cW \leftarrow$ {\tt hierachical\_line\_search} ($\bm a, \bm b, z_{\rm min}, z_{\rm max}$)
		\vspace{4pt}
		\STATE $p^{\rm sel}_j \leftarrow$  Eq. \eq{eq:selective_p_reformulated} with $\cZ \leftarrow$ Eq. \eq{eq:cZ_new}
		\vspace{4pt}
	\ENDFOR
	\vspace{4pt}
  \ENSURE $\big \{ p^{\rm sel}_j \big \}_{j \in \cO_{\rm obs}}$ 
 \end{algorithmic}
\end{footnotesize}
\caption{{\tt CAD-DA}}
\label{alg:cad_da}
\end{algorithm}

\subsection{Extension to Multi-Dimension} \label{subsec:extension}
In this section, we generalize the problem setup and the proposed method in multi-dimension. 
We consider two random sets $X^s \in \RR^{n_s \times d}$ and $X^t \in  \RR^{n_t \times d}$ of $d$-dimensional vectors 
%
%
which are random samples from
%
\begin{align*}
	{\rm {vec}} (X^s) &\sim \NN \Big ({\rm vec} (M^s), ~ \Sigma^s_{{\rm vec} (X^s)} \Big ), \\
	{\rm {vec}} (X^t) &\sim \NN \Big ({\rm vec} (M^t), ~ \Sigma^t_{{\rm vec} (X^t)} \Big ),
\end{align*}
%
where ${\rm vec}(\cdot)$ is an operator that transforms a matrix into a vector with concatenated rows, $M^s$ and $M^t$ are signal matrices that are unknown, $\Sigma^s_{{\rm vec} (X^s)}$ and $\Sigma^t_{{\rm vec} (X^t)}$ are variance matrices that are known. 
%
%
After obtaining $\hat{T}$ by solving the OT problem, 
we obtain 
$\tilde{X}^s = n_s \hat{T} X^t$.
Then, the AD result after DA is:
\begin{align*}
	\cA: \big \{ \tilde{X}^s, X^t \big \} 
		\mapsto
		\cO \in [n_t].
\end{align*}
For $j \in \cO$, we consider the following hypotheses:
\begin{align*}
	{\rm H}_{0, j}:  M^t_{j, \kappa} = \bar{ M}^t_{- \cO, \kappa}
	\quad
	\text{vs.}
	\quad 
	{\rm H}_{1, j}:  M^t_{j, \kappa} \neq \bar{M}^t_{- \cO, \kappa},
\end{align*}
for all $\kappa \in [d]$, 
%
$
	\bar{M}^t_{- \cO, \kappa} = 
	\frac{1}{n_t - |\cO|} \sum \limits_{\ell \in [n_t] \setminus \cO}
	M^t_{\ell, \kappa}.
$

The test-statistic is then can be defined as:
\begin{align} \label{eq:test_statistic_mul}
	\Gamma_j = \sum_{\kappa \in [d]}
	|X^t_{j, \kappa} - \bar{X}^t_{- \cO, \kappa}| 
\end{align}
In order to compute the valid $p$-values, 
we consider the following conditional distribution of the test statistic:
\begin{align} \label{eq:conditional_distribution_mul}
	\mathbb{P} \Big ( 
	\Gamma_j
	\mid
	\cO_{X^s, X^t}
	=
	\cO_{\rm obs}, ~
	\cI_{X^s, X^t} 
	=
	\cI_{\rm obs}
	\Big ),
\end{align}
where $\cI_{X^s, X^t}$ is a set of all the signs of the subtractions in \eq{eq:test_statistic_mul}.
By extending the techniques in \S \ref{subsec:conditional_data_space} and \S \ref{subsec:identification_cZ}, 
the conditional space in \eq{eq:conditional_distribution_mul} can be obtained which is subsequently used to compute the $p$-value.


\section{Experiment} \label{sec:experment}

In this section, we demonstrate the performance of the proposed method.
We compared the performance of the following methods in terms of FPR and TPR:

$\bullet$ {\tt CAD-DA}: proposed method

$\bullet$ {\tt CAD-DA-oc}: proposed method with only the extra-conditioning described in \S \ref{subsec:conditional_data_space} (extension the ideas in \cite{lee2016exact, duy2021exact} to our proposed setting)

$\bullet$ {\tt Bonferroni}: the most popular multiple hypothesis testing approach

$\bullet$ {\tt Naive}: traditional statistical inference

$\bullet$ {\tt No Inference}: AD after DA without inference

We note that if a method cannot control the FPR under $\alpha$, it is \emph{invalid} and we do not need to consider its TPR.
A method has high TPR indicates that it has low FNR.
We set $\alpha = 0.05$.
We executed the code on Intel(R) Xeon(R) CPU E5-2687W v4 @ 3.00GHz.

\subsection{Numerical Experiments.}

\textbf{Univariate case.} We generated $\bm X^s$ and $\bm X^t$ with $\mu^s_i = 0$, $\veps^s_i \sim \NN(0, 1)$, for all $i \in [n_s]$, and $ \mu^t_j = 2, \veps^t_j \sim \NN(0, 1)$, for all $j \in [n_t]$.
We randomly selected $5$ data points in the target domain and made them to be abnormal by setting $\mu^t_j = \mu^t_j + \Delta$.
Regarding the FPR experiments, we set $n_s \in \{ 50, 100, 150, 200 \}$, $n_t = 25$, and $\Delta = 0$.
In regard to the TPR experiments, we set $n_s = 150$, $n_t = 25$, and $\Delta \in \{ 1, 2, 3, 4\} $. 
Each experiment was repeated 120 times.
The results are shown in Fig. \ref{fig:fpr_tpr_1d}.
In the plot on the left, the {\tt CAD-DA}, {\tt CAD-DA-oc} and {\tt Bonferroni} controlled the FPR under $\alpha$ whereas the {\tt Naive} and {\tt No Inference} \emph{could not}. 
Because the {\tt Naive} and {\tt No Inference} failed to control the FPR, we no longer considered the TPR .
In the plot on the left, we can see that the {\tt CAD-DA} has highest TPR compared to other methods in all the cases, i.e., the {\tt CAD-DA} has lowest FNR compared to the competitors.

\begin{figure}[!t]
     \centering
     \begin{subfigure}[b]{0.492\linewidth}
         \centering
         \includegraphics[width=\textwidth]{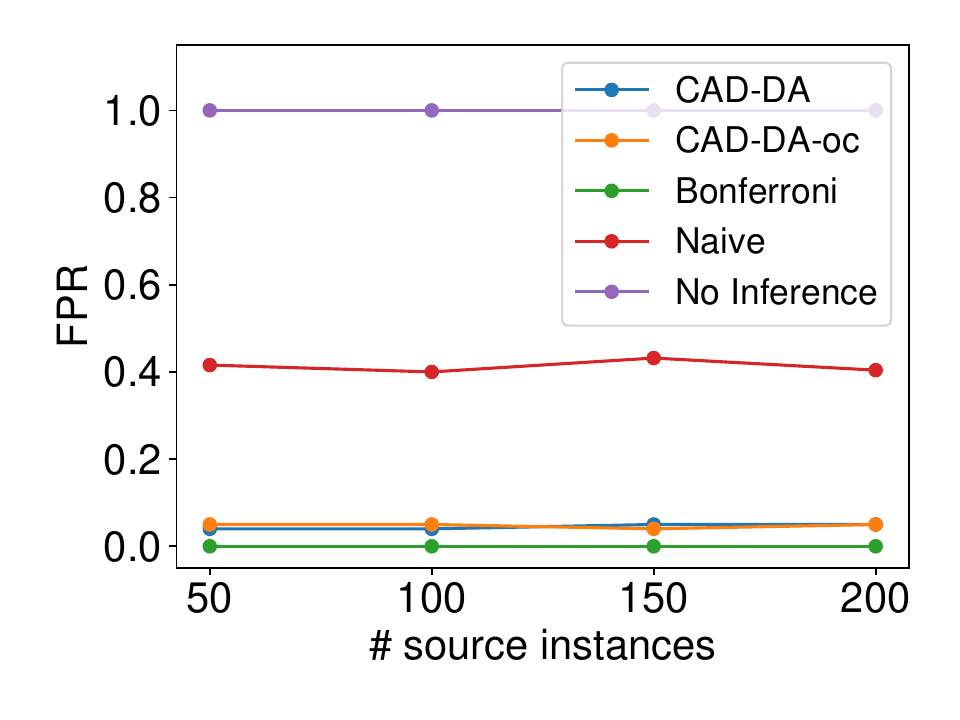}
         \caption{FPR}
     \end{subfigure}
     \hfill
     \begin{subfigure}[b]{0.492\linewidth}
         \centering
         \includegraphics[width=\textwidth]{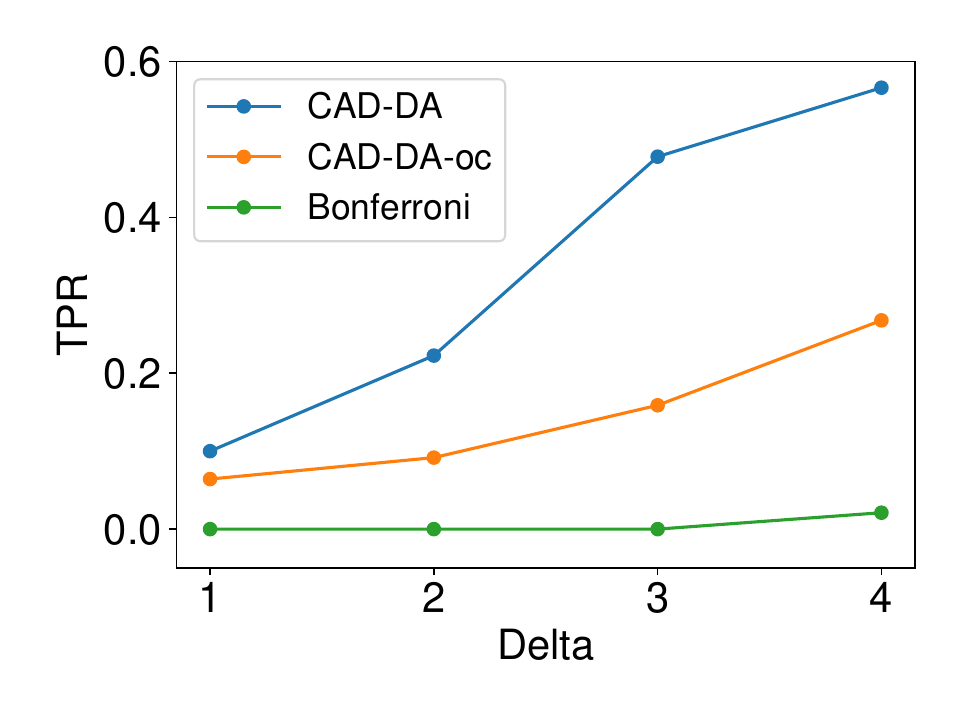}
         \caption{TPR}
     \end{subfigure}
     \caption{FPR and TPR in univariate case}
     \label{fig:fpr_tpr_1d}
     
\end{figure}

\begin{figure}[!t]
     \centering
     \begin{subfigure}[b]{0.492\linewidth}
         \centering
         \includegraphics[width=\textwidth]{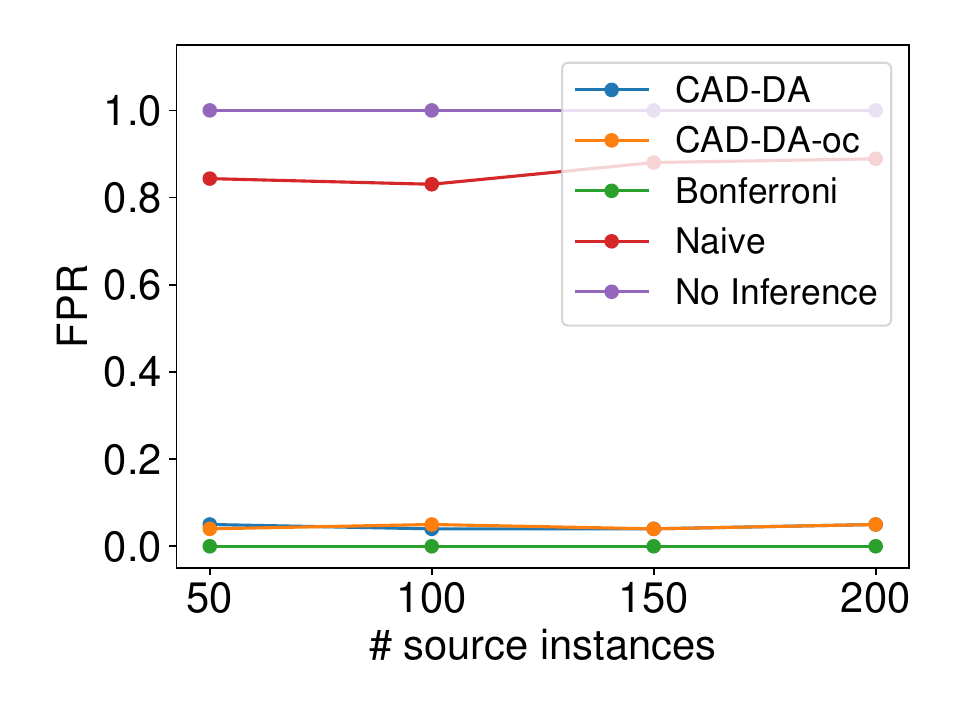}
         \caption{FPR}
     \end{subfigure}
     \hfill
     \begin{subfigure}[b]{0.492\linewidth}
         \centering
         \includegraphics[width=\textwidth]{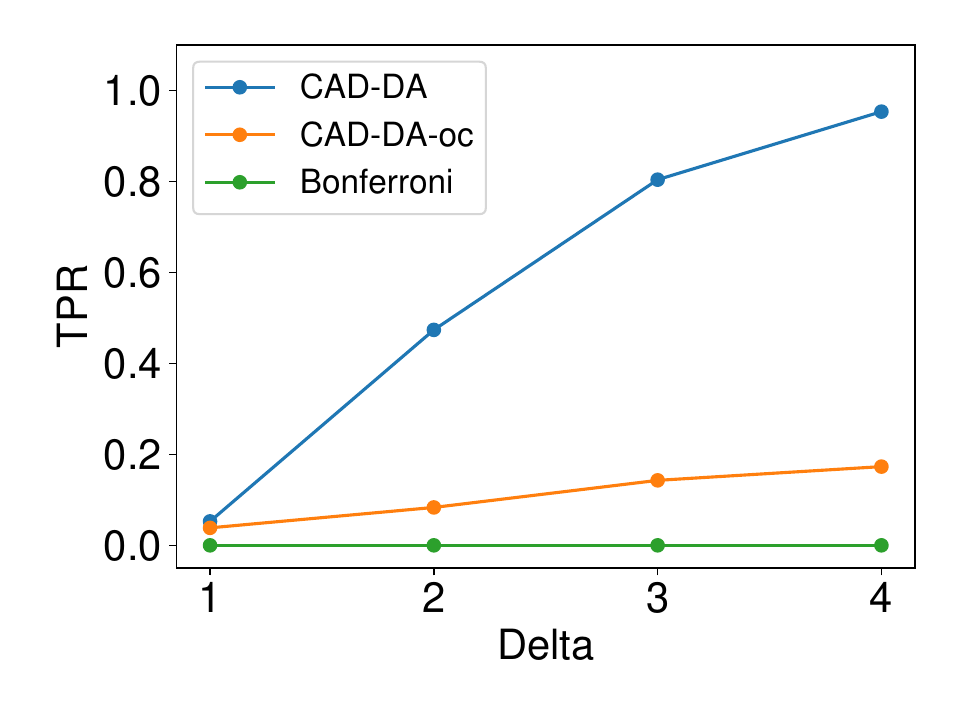}
         \caption{TPR}
     \end{subfigure}
     \caption{FPR and TPR in multi-dimensional case}
     \label{fig:fpr_tpr_10d}
     \vspace{-8pt}
\end{figure}

\textbf{Multi-dimensional case.} We generated $X^s$ and $X^t$ with $X^s_{i, :} \sim \NN(\bm 0_d, I_d), \forall i \in [n_s]$, $X^t_{j, :} \sim \NN(\bm 2_d, I_d), \forall j \in [n_t]$, and the dimension $d = 10$.
The settings 
were the same as univariate case.
The results are shown in Fig. \ref{fig:fpr_tpr_10d}.
%
%
The important point we would like to note is that the difference in TPR between the {\tt CAD-DA} and {\tt CAD-DA-oc} in the multi-dimensional case is larger than that in the univariate case.
This indicates that the issue of extra-conditioning is more serious when increasing $d$.
If we naively extend the ideas in \cite{lee2016exact, duy2021exact} to our setting, the TPR will be low, i.e., the FNR is high.
In contrast, by introducing the hierarchical linear search step to remove the extra-conditioning, the FNR is significantly decreased by the proposed {\tt CAD-DA} method.

\textbf{Correlated data.} In this setting, we consider the case where the data is not independent.
We generated $X^s$ and $X^t$ with $X^s_{i, :} \sim \NN(\bm 0_d, \Xi), \forall i \in [n_s]$, $X^t_{j, :} \sim \NN(\bm 2_d, \Xi), \forall j \in [n_t]$, the matrix 
$
\Xi = \left [\rho^{|i - j|} \right ]_{ij}, \forall i,j \in [d],
$
$\rho = 0.5$, and $d = 10$.
The settings for FPR and TPR experiments were also the same as univariate case.
The results are shown in Fig. \ref{fig:fpr_tpr_correlated}.
Additionally, we also conducted FPR and TPR experiments when changing $\rho \in \{ 0.2, 0.4, 0.6, 0.8\}$. We set $n_s = 150, n_t = 25$, $\Delta = 0$ for FPR experiments, and $\Delta = 4$ for the TPR experiments.
The results are shown in Fig. \ref{fig:fpr_tpr_correlated_change_rho}.
In essence,  the correlated data contains redundant information. 
This means that the effective sample size is smaller than the actual sample size. 
A smaller effective sample size reduces the amount of information available for the statistical test, making it less powerful. 
Therefore, the TPR tends to decrease when increasing the value of $\rho$.
However, in all the case, the proposed {\tt CAD-DA} method consistently achieves the highest TPR while controlling the FPR.

\begin{figure}[!t]
     \centering
     \begin{subfigure}[b]{0.492\linewidth}
         \centering
         \includegraphics[width=\textwidth]{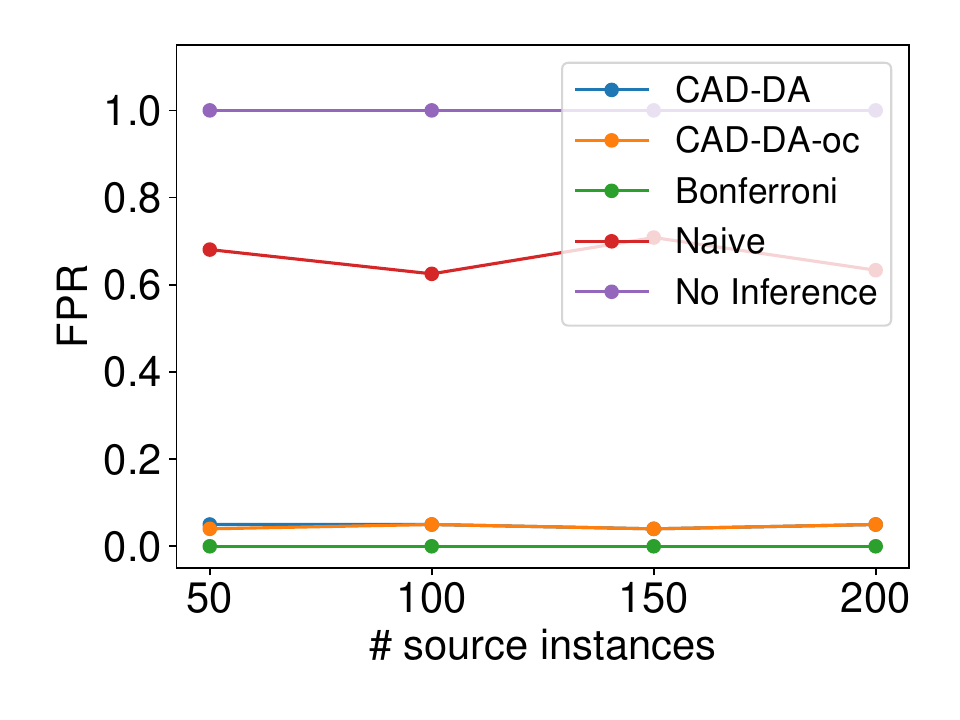}
         \caption{FPR}
     \end{subfigure}
     \hfill
     \begin{subfigure}[b]{0.492\linewidth}
         \centering
         \includegraphics[width=\textwidth]{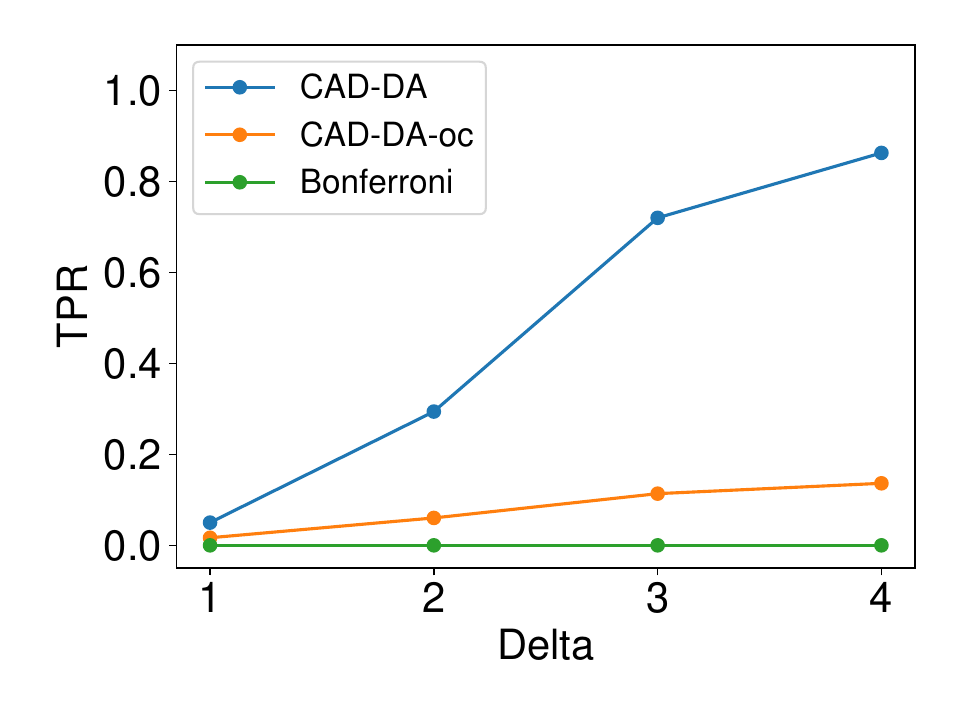}
         \caption{FPR}
     \end{subfigure}
     \caption{FPR and TPR in the case of correlated data}
     \label{fig:fpr_tpr_correlated}
\end{figure}

\begin{figure}[!t]
     \centering
     \begin{subfigure}[b]{0.492\linewidth}
         \centering
         \includegraphics[width=\textwidth]{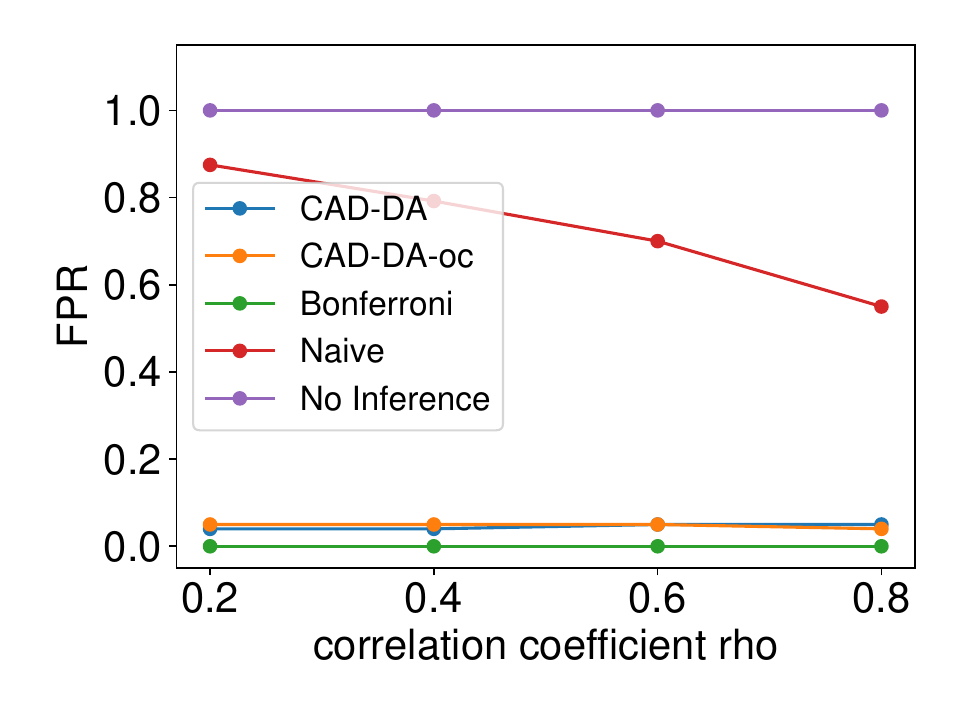}
         \caption{FPR}
     \end{subfigure}
     \hfill
     \begin{subfigure}[b]{0.492\linewidth}
         \centering
         \includegraphics[width=\textwidth]{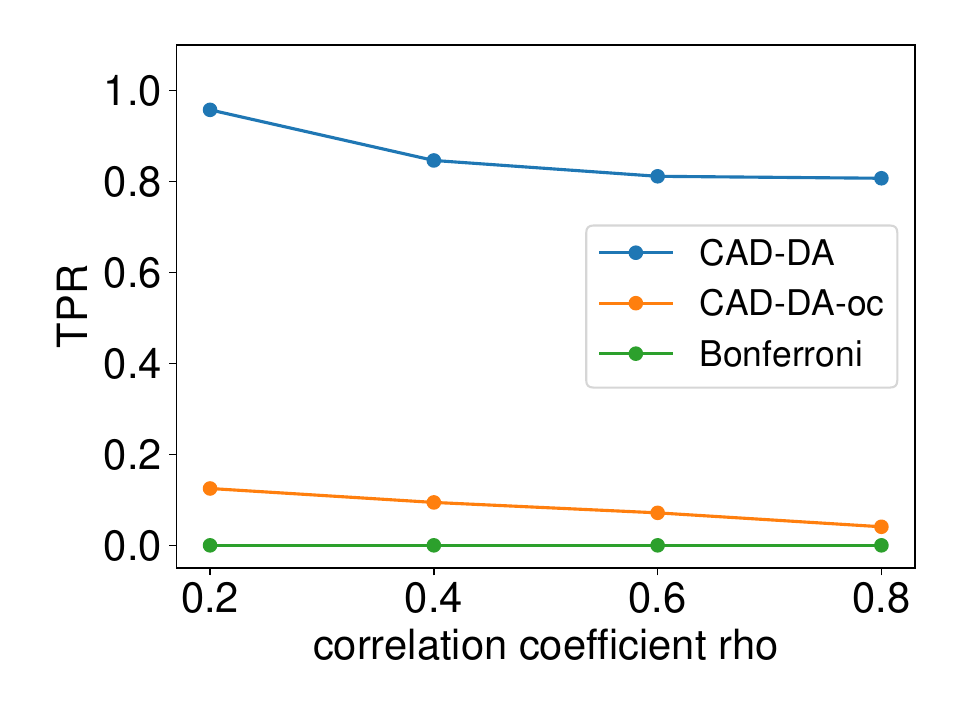}
         \caption{TPR}
     \end{subfigure}
     \caption{FPR and TPR when changing  $\rho$}
     \label{fig:fpr_tpr_correlated_change_rho}
\end{figure}

\textbf{Comparison with \cite{tsukurimichi2022conditional} and robustness experiments.}
Since the method of \cite{tsukurimichi2022conditional} is not applicable to our proposed setting, we have to introduce an extended setting for conducting the experiments of FPR comparison with their method.
The details are provided in Appendix \ref{appx:comparison_with_tsukurimichi}.
The results are shown in Fig. \ref{fig:comparison_with_tsukurimichi}.
The proposed {\tt CAD-DA} could properly control the FPR under $\alpha = 0.05$ whereas the existing method \cite{tsukurimichi2022conditional} failed because it could not account for the influence of DA process.
Additionally, we conducted the experiments on the robustness of the {\tt CAD-DA} when the noise follows Laplace distribution, skew normal distribution, $t_{\rm 20}$ distribution, and variance is estimated from the data. 
The details are shown in Appendix \ref{appx:robustness}.
Overall, the {\tt CAD-DA} method still maintained good performance.

\vspace{-4pt}

\subsection{Real-data Experiments}

\vspace{-2pt}

We performed power comparison on real-data.
We used the \emph{NeuroKit2} simulator \cite{Makowski2021neurokit} to generate realistic respiration signals ($n_s = 150$) used for source dataset, and heart-beat signals ($n_t = 25$) used for target dataset, each with a dimensionality of $d = 12$.
We repeated the experiment $N \in \{ 120, 240\} $ times.
The results are shown in Tab. \ref{tbl:real_data_1}.
While the {\tt Bonferroni}, {\tt CAD-DA-oc} and {\tt CAD-DA} could control the FPR, the {\tt CAD-DA} had the highest TPR in all the cases.
We note that, in the case of Bonferroni correction, the TPRs were all 0.0 which indicates that this method is too conservative and all the true anomalies were missed out even though there exists, i.e., FNR = 1.0.

\begin{figure}[!t]
\begin{minipage}{0.48\linewidth}
  \centering
  \includegraphics[width=\linewidth]{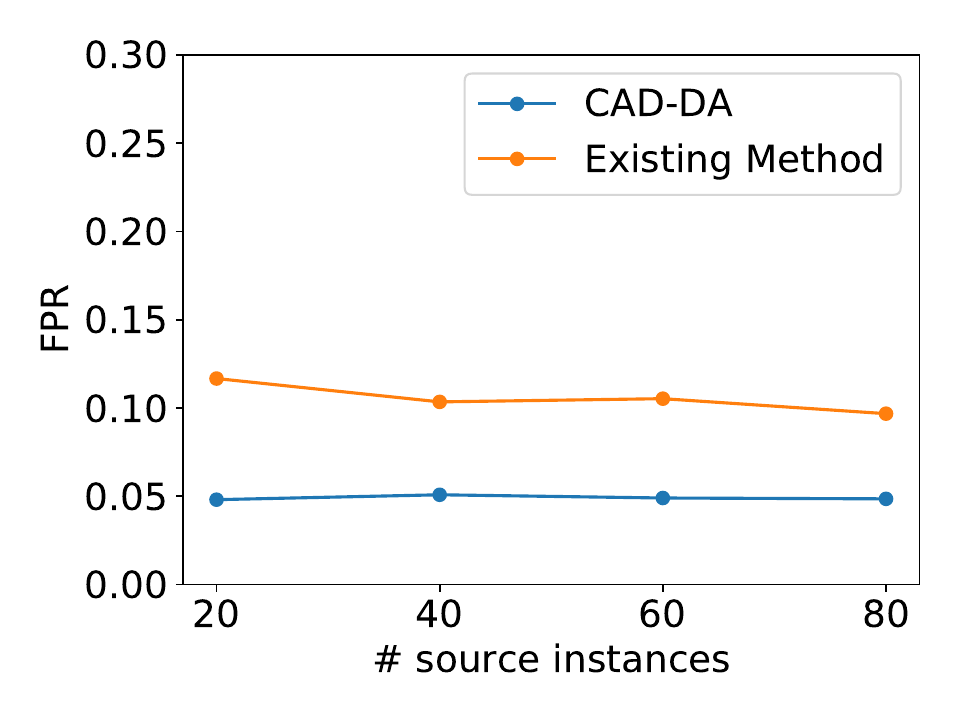}  
\caption{FPR comparison with \cite{tsukurimichi2022conditional}}
\label{fig:comparison_with_tsukurimichi}
\end{minipage}
\hspace{0.5mm}
\begin{minipage}{0.48\linewidth}
  \centering
  \includegraphics[width=\linewidth]{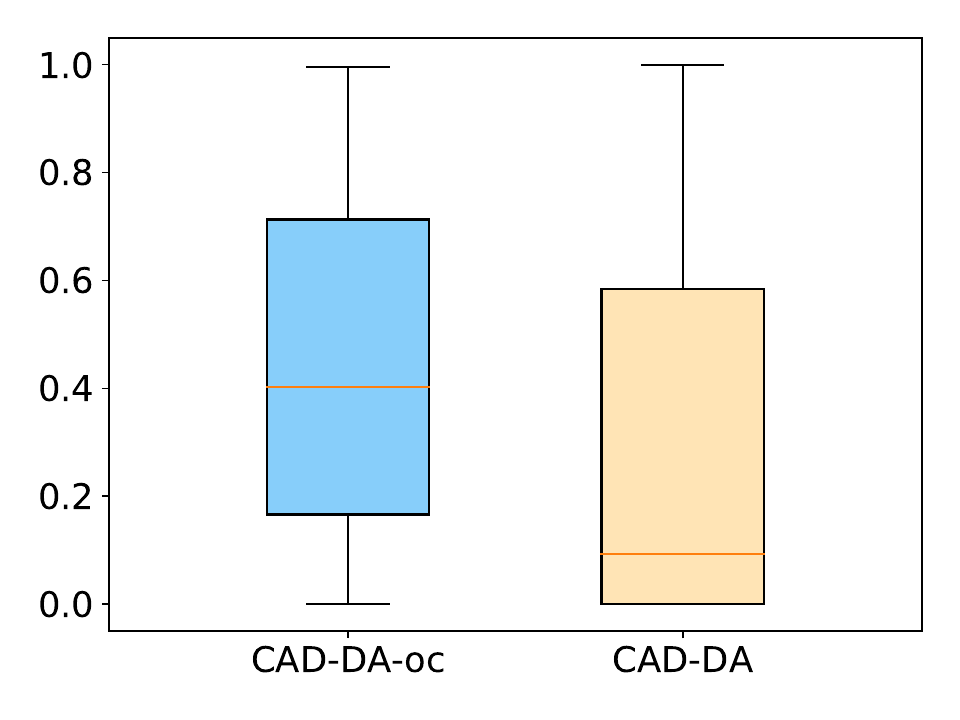}  
\caption{Boxplots of $p$-values}
\label{fig:boxplot}
\end{minipage}
\end{figure}

\begin{table} [!t]
\renewcommand{\arraystretch}{1.1}
\centering
\caption{Comparison on real-data.}
\begin{tabular}{ |l|c|c|c|c| } 
  \hline
  & \multicolumn{2}{|c|}{$N = 120$} & \multicolumn{2}{|c|}{$N = 240$} \\ 
  \hline
  & ~ FPR ~&~TPR~&~FPR~&~TPR~\\
  \hline
   \hline
 \textbf{No Inference} & 1.00 & N/A & 1.00 & N/A \\ 
  \hline
 \textbf{Naive} & 0.90 & N/A & 0.90 & N/A \\ 
 \hline
 \textbf{Bonferroni} & \textbf{0.00} & 0.00 & \textbf{0.00} &  0.00 \\ 
  \hline
 \textbf{CAD-DA-oc} & \textbf{0.05} & 0.12 & \textbf{0.04} & 0.07 \\ 
  \hline
 \textbf{CAD-DA} & \textbf{0.04} & \textbf{0.71} & \textbf{0.04} & \textbf{0.73}\\ 
 \hline
\end{tabular}
\label{tbl:real_data_1}
\vspace{-10pt}
\end{table}

Additionally, we compared the $p$-values of the {\tt CAD-DA-oc} and {\tt CAD-DA} on Heart Disease dataset, which is available at the UCI Machine Learning Repository.
We randomly selected $n_s = 150$ patients whose gender is male (source domain), $n_t = 25$ patients whose gender is female (target), and conducted the experiment to compute the $p$-value.
The experiments was repeated 120 times. The boxplots of the distribution of the $p$-values are illustrated in Fig. \ref{fig:boxplot}.
The $p$-values of the {\tt CAD-DA} tend to be smaller than those of {\tt CAD-DA-oc}, which indicates that the proposed {\tt CAD-DA} method has higher power than the {\tt CAD-DA-oc}.


\vspace{-5pt}

\section{Discussion} \label{sec:discussion}

\vspace{-5pt}
We propose a novel setup of testing the results of AD after DA and introduce a method to compute a valid $p$-value for conducting the statistical test.
We believe that this study stands as a significant step toward controllable machine leaning under DA. 
Some open questions remain. 
Our method currently does not support MMD-based DAs \cite{gong2013connecting, pan2010domain, baktashmotlagh2013unsupervised} or metric learning-based DA \cite{saenko2010adapting} because the selection event of these methods are more complicated than OT-based DA.
A potential solution could involve a sampling-based approach to approximate the truncation region for $p$-value computation.
Similarly, our method is applicable to AD algorithms with analytically characterizable event, such as MAD and  robust regression methods. 
Extending the proposed method to more complex AD algorithms would represent a valuable contribution.

\bibliographystyle{abbrv}
\bibliography{ref}

\newpage
\onecolumn

\section{Appendix} \label{sec:appx}


\subsection{Proof of Lemma \ref{lemma:valid_selective_p}} \label{appx:proof_valid_selective_p}

We have 
\[
\bm \eta_j^\top {\bm X^s \choose \bm X^t} \mid
	\left \{  
	\cO_{\bm X^s, \bm X^t}
	=
	\cO_{\rm obs},
	\cQ_{\bm X^s, \bm X^t}
	=
	\cQ_{\rm obs}
	\right \} 
	\sim 
	{\rm TN} 
	\left (
	\bm \eta_j^\top 
	{\bm \mu^s \choose \bm \mu^t},
	\bm \eta_j^\top \Sigma \bm \eta_j,
	\cZ
	\right ),
\]
which is a truncated normal distribution with a mean $\bm \eta_j^\top {\bm \mu^s \choose \bm \mu^t}$,
 variance $\bm \eta_j^\top \Sigma \bm \eta_j$ in which $
\Sigma = 
\begin{pmatrix}
	\Sigma^s & 0 \\ 
	0 & \Sigma^t
\end{pmatrix}
$, and the truncation region $\cZ$ described in Sec. \ref{subsec:conditional_data_space}.
Therefore, under the null hypothesis,
\begin{align*}
	p_j^{\rm sel}
	\mid 
	\{ \cO_{\bm X^s, \bm X^t}
	=
	\cO_{\rm obs},
	\cQ_{\bm X^s, \bm X^t}
	=
	\cQ_{\rm obs}
	\}
	\sim {\rm Unif}(0, 1).
\end{align*} 
Thus,
$
	\mathbb{P}_{\rm H_{0, j}}  \Big (
	p_j^{\rm sel} \leq \alpha
	\mid 
	\cO_{\bm X^s, \bm X^t}
	=
	\cO_{\rm obs},
	\cQ_{\bm X^s, \bm X^t}
	=
	\cQ_{\rm obs}
	\Big) = \alpha, \forall \alpha \in [0, 1].
$

Next, we have 
\begin{align*}
	&\mathbb{P}_{\rm H_{0, j}}  \Big (p_j^{\rm sel} \leq \alpha \mid \cO_{\bm X^s, \bm X^t} = \cO_{\rm obs} \Big ) \\ 
	&= 
	\int
	\mathbb{P}_{\rm H_{0, j}}  \Big (p_j^{\rm sel} \leq \alpha \mid \cO_{\bm X^s, \bm X^t} = \cO_{\rm obs},  \cQ_{\bm X^s, \bm X^t} = \cQ_{\rm obs} \Big ) ~
	\mathbb{P}_{\rm H_{0, j}}  \Big (\cQ_{\bm X^s, \bm X^t} = \cQ_{\rm obs} \mid \cO_{\bm X^s, \bm X^t} = \cO_{\rm obs} \Big ) d\cQ_{\rm obs} \\ 
	&= 
	\int \alpha 
	~ \mathbb{P}_{\rm H_{0, j}}  \Big (\cQ_{\bm X^s, \bm X^t} = \cQ_{\rm obs} \mid \cO_{\bm X^s, \bm X^t} = \cO_{\rm obs} \Big ) d\cQ_{\rm obs} \\ 
	& = 
	\alpha 
	\int \mathbb{P}_{\rm H_{0, j}}  \Big (\cQ_{\bm X^s, \bm X^t} = \cQ_{\rm obs} \mid \cO_{\bm X^s, \bm X^t} = \cO_{\rm obs} \Big ) d\cQ_{\rm obs} \\ 
	& = 
	\alpha.
\end{align*} 
Finally, we obtain the result in Lemma \ref{lemma:valid_selective_p} as follows:
\begin{align*}
	\mathbb{P}_{\rm H_{0, j}}  \Big (p_j^{\rm sel} \leq \alpha \Big ) 
	& = \sum \limits_{\cO_{\rm obs}}
	\mathbb{P}_{\rm H_{0, j}}  \Big (p_j^{\rm sel} \leq \alpha \mid \cO_{\bm X^s, \bm X^t} = \cO_{\rm obs} \Big ) ~
	\mathbb{P}_{\rm H_{0, j}}  \Big (\cO_{\bm X^s, \bm X^t} = \cO_{\rm obs} \Big ) \\ 
	& = \sum \limits_{\cO_{\rm obs}} \alpha ~ \mathbb{P}_{\rm H_{0, j}}  \Big (\cO_{\bm X^s, \bm X^t} = \cO_{\rm obs} \Big ) \\ 
	& = \alpha \sum \limits_{\cO_{\rm obs}} \mathbb{P}_{\rm H_{0, j}}  \Big (\cO_{\bm X^s, \bm X^t} = \cO_{\rm obs} \Big ) \\ 
	& = \alpha.
\end{align*}


\subsection{Proof of Lemma \ref{lemma:data_line}} \label{appx:proof_lemma_data_line}

According to the second condition in \eq{eq:conditional_data_space}, we have 
\begin{align*}
	\cQ_{\bm X^s, \bm X^t} & =  \cQ_{\rm obs} \\ 
	\Leftrightarrow 
	\left ( 
	I_{n_s + n_t} - 
	\bm b
	\bm \eta_j^\top \right ) 
	{\bm X^s \choose \bm X^t}
	& = 
	\cQ_{\rm obs}\\ 
	\Leftrightarrow 
	{\bm X^s \choose \bm X^t}
	& = 
	\cQ_{\rm obs}
	+ \bm b
	\bm \eta_j^\top  
	{\bm X^s \choose \bm X^t}.
\end{align*}
By defining 
$\bm a = \cQ_{\rm obs}$,
$z = \bm \eta_j^\top {\bm X^s \choose \bm X^t}$, and incorporating the first condition of \eq{eq:conditional_data_space}, we obtain Lemma \ref{lemma:data_line}.


\subsection{Proof of Lemma \ref{lemma:cZ_1}} \label{appx:proof_lemma_cZ_1}

The proof is constructed based on the results presented in \cite{duy2021exact}, in which the authors introduced an approach to characterize the event of OT by using the concept of \emph{parametric linear programming}.
Let us re-written the OT problem between the source and target domain in \eq{eq:ot_problem} as:
\begin{align*}
	\hat{\bm t} = \argmin \limits_{\bm t \in \RR^{n_s n_t}} ~~ & 
	\bm t^\top \bm c(\bm X^s, \bm X^t) \\
	\text{s.t.} ~~
	& H \bm t = \bm h, ~\bm t \geq \bm 0, \nonumber
\end{align*}
where $\bm t = {\rm {vec}}(T)$,
$
	\bm c(\bm X^s, \bm X^t) 
	= {\rm {vec}} (C(\bm X^s, \bm X^t)) 
	= \left [ \Omega {\bm X^s \choose \bm X^t} \right] \circ \left [ \Omega {\bm X^s \choose \bm X^t} \right]
$,
\[
\Omega = {\rm hstack}\left ( 
	I_{n_s} \otimes \bm 1_{n_t}, - \bm 1_{n_s} \otimes I_{n_t} \right ) 
\in \RR^{n_s n_t \times (n_s + n_t)},
\]
${\rm vec}(\cdot)$ is an operator that transforms a matrix into a vector with concatenated rows, the operator $\circ$ is element-wise product, $\rm hstack(\cdot, \cdot)$ is horizontal stack operation, $I_n \in \RR^{n \times n}$ is the identity matrix, and $\bm 1_m \in \RR^m$ is a vector of ones.
The matrix $H$ is defined as 
$ H = \left( H_r ~ H_c \right )^\top \in \RR^{(n_s + n_t) \times n_s n_t}$ in which 
\begin{align*}
	H_r = 
	\begin{bmatrix}
		1 ~ \ldots ~  1 & 0 ~ \ldots ~  0 & \ldots & 0 ~ \ldots ~  0 \\
		0 ~ \ldots ~  0 & 1 ~ \ldots ~  1 & \ldots & 0 ~ \ldots ~  0 \\
		 ~ \ldots ~   &  ~ \ldots ~   & \ldots &  ~ \ldots ~   \\
		0 ~ \ldots ~  0 & 0 ~ \ldots ~  0 & \ldots & 1 ~ \ldots ~  1 \\
	\end{bmatrix} \in \RR^{n_s \times n_s n_t}
\end{align*} 
that performs the sum over the rows of $T$ and 
\begin{align*}
	 H_c = 
	\begin{bmatrix}
		I_{n_t} & I_{n_t} & \ldots & I_{n_t}
	\end{bmatrix} \in \RR^{n_t \times n_s n_t}
\end{align*}
that performs the sum over the columns of $T$, and $\bm h = \left (\bm 1_{n_s}/{n_s} ~ \bm 1_{n_t}/{n_t} \right)^\top \in \RR^{n_s + n_t}$.

Next, we consider the OT problem with the parametrized data $\bm a + \bm b z$:
\begin{align*} 
	& \min \limits_{\bm t \in \RR^{n_s n_t}} ~  
	\bm t^\top 
	[ \Omega (\bm a + \bm b z) 
	\circ
	\Omega (\bm a + \bm b z)]
	~ ~
	\text{s.t.}
	~ ~
	 H \bm t = \bm h, \bm t \geq \bm 0 \\ 
	 \Leftrightarrow
	 & \min \limits_{\bm t \in \RR^{n_s n_t}} ~  
	(\tilde{\bm w} + \tilde{\bm r} z + \tilde{\bm o} z^2)^\top \bm t
	~ ~
	\text{s.t.}
	~ ~
	 H \bm t = \bm h, \bm t \geq \bm 0,
\end{align*}
where 
\[ \tilde{\bm w} = (\Omega \bm a) \circ (\Omega \bm a), 
\quad \tilde{\bm r} = (\Omega \bm a) \circ (\Omega \bm b) + (\Omega \bm b) \circ (\Omega \bm a), 
\quad \text{and} \quad  \tilde{\bm o} = (\Omega \bm b) \circ (\Omega \bm b).\]
By fixing $\cB_u$ as the optimal basic index set of the linear program, the \emph{relative cost vector} w.r.t to the set of non-basic variables $\cB^c_u$ is defines as 
\begin{align*}
	\bm r_{\cB^c_u} = \bm w + \bm r z + \bm o z^2,
\end{align*}
where 
\begin{align} \label{eq:w_r_o}
\bm w = 
\left(
	\tilde{\bm w}_{\cB^c_u}^\top - \tilde{\bm w}_{\cB_u}^\top H_{:, \cB_u}^{-1} H_{:, \cB^c_u}
\right)^\top,
\quad
\bm r = 
\left(
	\tilde{\bm r}_{\cB^c_u}^\top - \tilde{\bm r}_{\cB_u}^\top H_{:, \cB_u}^{-1} H_{:, \cB^c_u}
\right)^\top,
\quad
\bm o = 
\left(
	\tilde{\bm o}_{\cB^c_u}^\top - \tilde{\bm o}_{\cB_u}^\top H_{:, \cB_u}^{-1} H_{:, \cB^c_u}
\right)^\top,
\end{align}
$H_{:, \cB_u}$ is a sub-matrix of $S$ made up of all rows and columns in the set $\cB_u$.
The requirement for $\cB_u$ to be the optimal basis index set is $\bm r_{\cB^c_u} \geq \bm 0$ (i.e., the cost in minimization problem will never decrease when the non-basic variables become positive and enter the basis).
We note that the optimal basis index set $\cB_u$ corresponds to the transportation $\cT_u$.
Therefore, the set $\cZ_u$ can be defined as 
\begin{align*}
	\cZ_u &= \big \{ z \in \RR \mid  \cT_{\bm a + \bm b z} = \cT_u \big\}, \\ 
	&= \big \{ z \in \RR \mid  \cB_{\bm a + \bm b z} = \cB_u \big\}, \\ 
	& = \big \{ z \in \RR \mid  \bm r_{\cB^c_u} = \bm w + \bm r z + \bm o z^2 \geq \bm 0 \big\}.
\end{align*}
Thus, we obtain the result in Lemma \ref{lemma:cZ_1}.

\subsection{Proof of Lemma \ref{lemma:cZ_2}} \label{appx:proof_lemma_cZ_2}

For notational simplicity, let us denote the original data and the data after OT-based DA as follows:
\begin{align*}
	\bm Y = { \bm X^s \choose \bm X^t } \in \RR^{n_s + n_t}, 
	\quad 
	\tilde{\bm Y} = { \hat{\bm X}^s \choose \bm X^t } = \Theta \bm Y, 
	\text{ where }
	\Theta = 
	\begin{pmatrix}
		0_{n_s \times n_s} ~ n_s \hat{T} \\ 
		0_{n_t \times n_s} ~ I_{n_t}
	\end{pmatrix}
	\in \RR^{(n_s + n_t) \times (n_s + n_t)}.
\end{align*}
The procedure of applying MAD on $\tilde{\bm Y}$ is described as follows:

\begin{enumerate}
	\item $\tilde{Y}_{k_1} = {\rm median} (\hat{\bm Y})$. This step can be represented by the following sets:
	\begin{align} \label{eq:condition_1}
	\begin{aligned}
		\cS_v^{1a} = \Big \{ 
			(k, k_1) : \tilde{Y}_k \leq \tilde{Y}_{k_1}
		\Big \},  \quad 
		\cS_v^{1b} = \Big \{ 
			(k, k_1) : \tilde{Y}_k \geq \tilde{Y}_{k_1}
		\Big \},
		\quad k \in [n_s + n_t].
	\end{aligned}
	\end{align}
	\item $|\tilde{Y}_{k_2} - \tilde{Y}_{k_1}| = {\rm median} \Big( \big \{ | \tilde{Y}_k - \tilde{Y}_{k_1} | \big \}_{k \in [n_s + n_t]} \Big )$.
	This step can be represented by the following sets:
	\begin{subequations} \label{eq:condition_2}
	\begin{align} 
		\cS_v^{2a} &= 
		\left \{ 
			s_k : s_k = {\rm sign} \left(\tilde{Y}_k - \tilde{Y}_{k_1}\right)
		\right \}, \\ 
		\cS_v^{2b} &= \Big \{ 
			(k, k_2) : s_k \big (\tilde{Y}_k - \tilde{Y}_{k_1}\big ) \leq s_{k_2} \big (\tilde{Y}_{k_2} - \tilde{Y}_{k_1}\big )
		\Big \}, \\  
		\cS_v^{2c} &= \Big \{ 
			(k, k_2) : s_k \big (\tilde{Y}_k - \tilde{Y}_{k_1}\big ) \geq s_{k_2} \big (\tilde{Y}_{k_2} - \tilde{Y}_{k_1}\big )
		\Big \},
	\end{align}
	\end{subequations}
	for any $k \in [n_s + n_t]$.
	\item Given $\gamma$, $\tilde{Y}_k$ is considered to be an anomaly if 
	$\tilde{Y}_k \not \in \Big [\tilde{Y}_{k_1} - \gamma \big |\tilde{Y}_{k_2} - \tilde{Y}_{k_1} \big |, \tilde{Y}_{k_1} + \gamma \big |\tilde{Y}_{k_2} - \tilde{Y}_{k_1} \big |\Big]$.
	This step can be represented by the following sets:
	\begin{subequations} \label{eq:condition_3}
	\begin{align}
		\cS_v^{3a} &= 
		\Big \{
			k \in [n_s + n_t] :  \tilde{Y}_k < \tilde{Y}_{k_1} - \gamma s_{k_2}\big (\tilde{Y}_{k_2} - \tilde{Y}_{k_1} \big )
		\Big\}, \\ 
		\cS_v^{3b} &= 
		\Big \{
			k \in [n_s + n_t] :  \tilde{Y}_k > \tilde{Y}_{k_1} + \gamma s_{k_2}\big (\tilde{Y}_{k_2} - \tilde{Y}_{k_1} \big )
		\Big\}, \\ 
		\cS_v^{3c} &= 
		\Big \{
			k \in [n_s + n_t] :  
			\tilde{Y}_{k_1} - \gamma s_{k_2}\big (\tilde{Y}_{k_2} - \tilde{Y}_{k_1} \big )
			\leq
			\tilde{Y}_k
			\leq
			\tilde{Y}_{k_1} + \gamma s_{k_2}\big (\tilde{Y}_{k_2} - \tilde{Y}_{k_1} \big ) \label{eq:condition_3_last}
		\Big\}. 
	\end{align}
	\end{subequations}
\end{enumerate}

The entire MAD algorithm can be represented as $\cS_v = \cS_v^{1a} \cup \cS_v^{1b} \cup \cS_v^{2a} \cup \cS_v^{2b} \cup \cS_v^{2c} \cup \cS_v^{3a} \cup \cS_v^{3b} \cup \cS_v^{3c}$.

For any data point $\bm a + \bm b z$, if it satisfies all the inequalities from \eq{eq:condition_1} to \eq{eq:condition_3}, $\cS_{\bm a + \bm b z} = \cS_v$.
Regarding the inequalities in $\cS_v^{1a}$ of \eq{eq:condition_1} w.r.t. $\bm a + \bm b z$, we can write 
\begin{align*}
	&\bm e_k^\top \Theta (\bm a + \bm b z) \leq \bm e_{k_1}^\top \Theta (\bm a + \bm b z) \\ 
	\Leftrightarrow ~
	& ( \bm e_k - \bm e_{k_1} )^\top \Theta \bm b z 
	\leq 
	( \bm e_{k_1} - \bm e_k )^\top \Theta \bm a,
\end{align*}
for all $(k, k_1) \in \cS^{1a}_v$, and $\bm e_k \in \RR^{n_s + n_t}$ is a vector with 1 at the $k^{\rm th}$, and 0 otherwise.
Then we have a system of linear inequalities $\bm p^{1a} z \leq \bm q^{1a}$ where 
\begin{align*}
	\bm p^{1a} = {\rm vector} \left( \Big \{ \bm e_{k, k_1}^\top \Theta \bm b \Big \}_{(k, k_1) \in \cS^{1a}_v} \right), ~
	\bm q^{1a} = {\rm vector} \left( \Big \{ \bm e_{k_1, k}^\top \Theta \bm a \Big \}_{(k, k_1) \in \cS^{1a}_v} \right),
\end{align*}
${\rm vector} (\cdot)$ is the operation that converts a set to a vector, and $\bm e_{k, k_1} = \bm e_k - \bm e_{k_1}$.
%
%
Similarly, we obtain $\bm p^{1b} z \leq \bm q^{1b}$, where
\begin{align*}
	\bm p^{1b} = {\rm vector} \left( \Big \{ \bm e_{k_1, k}^\top \Theta \bm b \Big \}_{(k, k_1) \in \cS^{1a}_v} \right), ~
	\bm q^{1b} = {\rm vector} \left( \Big \{ \bm e_{k, k_1}^\top \Theta \bm a \Big \}_{(k, k_1) \in \cS^{1a}_v} \right).
\end{align*}
In regard to \eq{eq:condition_2}, we have three systems of linear inequalities $\bm p^{2a} z \leq \bm q^{2a}$, $\bm p^{2b} z \leq \bm q^{2b}$, and $\bm p^{2c} z \leq \bm q^{2c}$, where 
\begin{align*}
	\bm p^{2a} & = {\rm vector} \left( \Big \{ s_k \bm e_{k_1, k}^\top \Theta \bm b \Big \}_{s_k \in \cS^{2a}_v} \right), ~
	\bm q^{2a} = {\rm vector} \left( \Big \{ s_k \bm e_{k, k_1}^\top \Theta \bm a \Big \}_{s_k \in \cS^{2a}_v} \right), \\ 
	\bm p^{2b} & = {\rm vector} \left( \Big \{ \big ( s_k \bm e_{k, k_1} - s_{k_2} \bm e_{k_2, k_1} \big )^\top \Theta \bm b \Big \}_{(k, k_2) \in \cS^{2b}_v} \right), ~
	\bm q^{2b} = {\rm vector} \left( \Big \{ \big ( s_{k_2} \bm e_{k_2, k_1} - s_k \bm e_{k, k_1} \big )^\top \Theta \bm a \Big \}_{(k, k_2) \in \cS^{2b}_v} \right), \\ 
	\bm p^{2c} & = {\rm vector} \left( \Big \{ \big (s_{k_2} \bm e_{k_2, k_1} - s_k \bm e_{k, k_1} \big )^\top \Theta \bm b \Big \}_{(k, k_2) \in \cS^{2c}_v} \right), ~
	\bm q^{2c} = {\rm vector} \left( \Big \{ \big (s_k \bm e_{k, k_1} - s_{k_2} \bm e_{k_2, k_1} \big )^\top \Theta \bm a \Big \}_{(k, k_2) \in \cS^{2c}_v} \right).
\end{align*}
With regard to \eq{eq:condition_3}, we have four systems of linear inequalities $\bm p^{3a} z \leq \bm q^{3a}$, $\bm p^{3b} z \leq \bm q^{3b}$, $\bm p^{3c, 1} z \leq \bm q^{3c, 1}$, and $\bm p^{3c, 2} z \leq \bm q^{3c, 2}$ (the \eq{eq:condition_3_last} corresponds to two systems of linear inequalities), where 
\begin{align*}
	\bm p^{3a} & = {\rm vector} \left( \Big \{ \big (\gamma s_{k_2} \bm e_{k_2, k_1} - \bm e_{k_1, k} \big )^\top \Theta \bm b \Big \}_{k \in \cS^{3a}_v} \right), ~
	\bm q^{3a} = {\rm vector} \left( \Big \{ \big (\bm e_{k_1, k} - \gamma s_{k_2} \bm e_{k_2, k_1} \big )^\top \Theta \bm a \Big \}_{k \in \cS^{3a}_v} \right),\\ 
	\bm p^{3b} & = {\rm vector} \left( \Big \{ \big (\gamma s_{k_2} \bm e_{k_2, k_1} - \bm e_{k, k_1} \big )^\top \Theta \bm b \Big \}_{k \in \cS^{3b}_v} \right), ~
	\bm q^{3b} = {\rm vector} \left( \Big \{ \big (\bm e_{k, k_1} - \gamma s_{k_2} \bm e_{k_2, k_1} \big )^\top \Theta \bm a \Big \}_{k \in \cS^{3b}_v} \right), \\
	\bm p^{3c, 1} & = {\rm vector} \left( \Big \{ \big ( \bm e_{k_1, k} - \gamma s_{k_2} \bm e_{k_2, k_1} \big )^\top \Theta \bm b \Big \}_{k \in \cS^{3c}_v} \right), ~
	\bm q^{3c, 1} = {\rm vector} \left( \Big \{ \big (\gamma s_{k_2} \bm e_{k_2, k_1} - \bm e_{k_1, k}\big )^\top \Theta \bm a \Big \}_{k \in \cS^{3c}_v} \right) ,\\
	\bm p^{3c, 2} & = {\rm vector} \left( \Big \{ \big ( \bm e_{k, k_1} - \gamma s_{k_2} \bm e_{k_2, k_1} \big )^\top \Theta \bm b \Big \}_{k \in \cS^{3c}_v} \right), ~
	\bm q^{3c, 2} = {\rm vector} \left( \Big \{ \big (\gamma s_{k_2} \bm e_{k_2, k_1} - \bm e_{k, k_1}\big )^\top \Theta \bm a \Big \}_{k \in \cS^{3c}_v} \right).
\end{align*}
Finally, by defining 
\begin{align}
	\bm p &= {\rm vstack} \Big ( \bm p^{1a}, \bm p^{1b}, \bm p^{2a}, \bm p^{2b}, \bm p^{2c}, \bm p^{3a}, \bm p^{3b}, \bm p^{3c, 1}, \bm p^{3c, 2}\Big ), \\ 
	\text{and} ~~ \bm q &= {\rm vstack} \Big ( \bm q^{1a}, \bm q^{1b}, \bm q^{2a}, \bm q^{2b}, \bm q^{2c}, \bm q^{3a}, \bm q^{3b}, \bm q^{3c, 1}, \bm q^{3c, 2}\Big ),
\end{align}
we obtain the result in Lemma \ref{lemma:cZ_2}.

%
%
%


\subsection{Comparison with \cite{tsukurimichi2022conditional}} \label{appx:comparison_with_tsukurimichi}

Since the method of \cite{tsukurimichi2022conditional} is not applicable to our proposed setting, we have to introduce an extended setting for conducting the experiments of FPR comparison with their method.
The method of \cite{tsukurimichi2022conditional} primarily focus on a regression problem.
Let us consider $(X^s, \bm Y^s)$ and $(X^t, \bm Y^t)$, where $X^s \in \RR^{n_s \times p}$ and $X^t \in \RR^{n_t \times p}$ are given feature matrices assumed to be non-random,
\begin{align*}
	\RR^{n_s} \ni \bm Y^s \sim \NN (\bm \mu^s, \Sigma^s) \quad \text{and} \quad \RR^{n_t} \ni \bm Y^t \sim \NN (\bm \mu^t, \Sigma^t).
\end{align*}
The cost matrix is defined as 
\begin{align*} 
	C(\bm Y^s, \bm Y^t) 
	& = \big[(Y_i^s - Y_j^t)^2 \big]_{ij} \in \RR^{n_s \times n_t}.
\end{align*}
After obtaining $\hat{T}$ by solving the OT problem with $C( \bm Y^s, \bm Y^t)$, we transform the data from the source domain to the target domain and conduct a robust regression in the target domain.
The problem of hypothesis testing is the same as \cite{tsukurimichi2022conditional}.
However, when computing the $p$-value for conducting the test, the method of \cite{tsukurimichi2022conditional} does not take into account the influence of DA. 
Therefore, their method is invalid and could not control the FPR.
In contrast, with the proposed method, we can successfully control the FPR by handling the effect of DA process.
In the experiment for FPR comparison, we set $n_s \in \{20, 40, 60, 80\}$, $n_t = 15$, $p = 5$ and used  LAD (least absolute deviations) as the robust regression method.
The results are shown in Fig. \ref{fig:comparison_with_tsukurimichi}.
The proposed {\tt CAD-DA} could properly control the FPR under $\alpha = 0.05$ whereas the existing method \cite{tsukurimichi2022conditional} failed because it could not account for the influence of DA process.


\subsection{Robustness experimentes} \label{appx:robustness}

We conducted the following experiments:

\begin{itemize}
 \item  Non-normal data: we considered the noise following Laplace distribution, skew normal distribution (skewness coefficient 10) and $t_{20}$ distribution.
We set $n_s \in \{ 100, 150, 200, 250 \}$, $n_t = 45$, and $d = 1$.
Each experiment was repeated 120 times.
We tested the FPR for both $\alpha = 0.05$ and $\alpha = 0.1$. 
The FPR results are shown in Figs. \ref{fig:app_laplace}, \ref{fig:app_skew_normal} and \ref{fig:app_t_20}. 
We confirmed that the {\tt CAD-DA} still maintained good performance on FPR control.

 \item Estimated variance: 
the variances of the noises were estimated from the data by using empirical variance.
Our proposed {\tt CAD-DA} method could properly control the FPR (Fig. \ref{fig:app_unknown_sigma}). 
\end{itemize}

\begin{figure}[H]
\centering
\begin{subfigure}{0.3\linewidth}
\centering
\includegraphics[width=\linewidth]{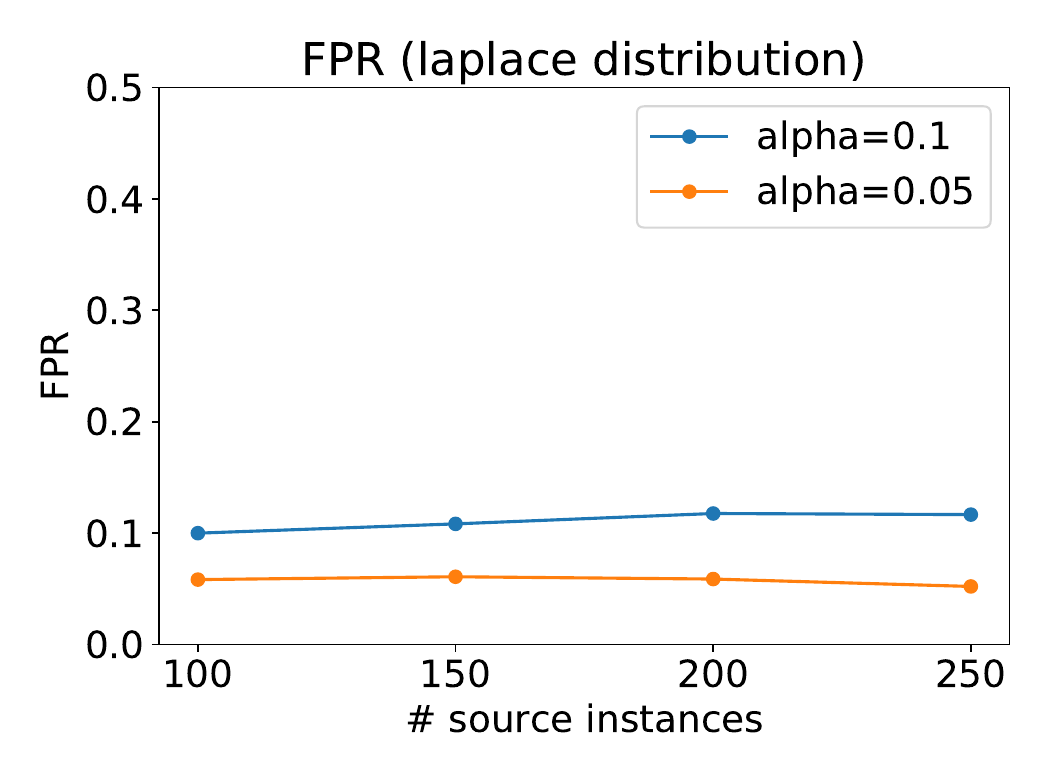}
\caption{Laplace distribution}
\label{fig:app_laplace}
\end{subfigure}
\begin{subfigure}{0.3\linewidth}
\centering
\includegraphics[width=\linewidth]{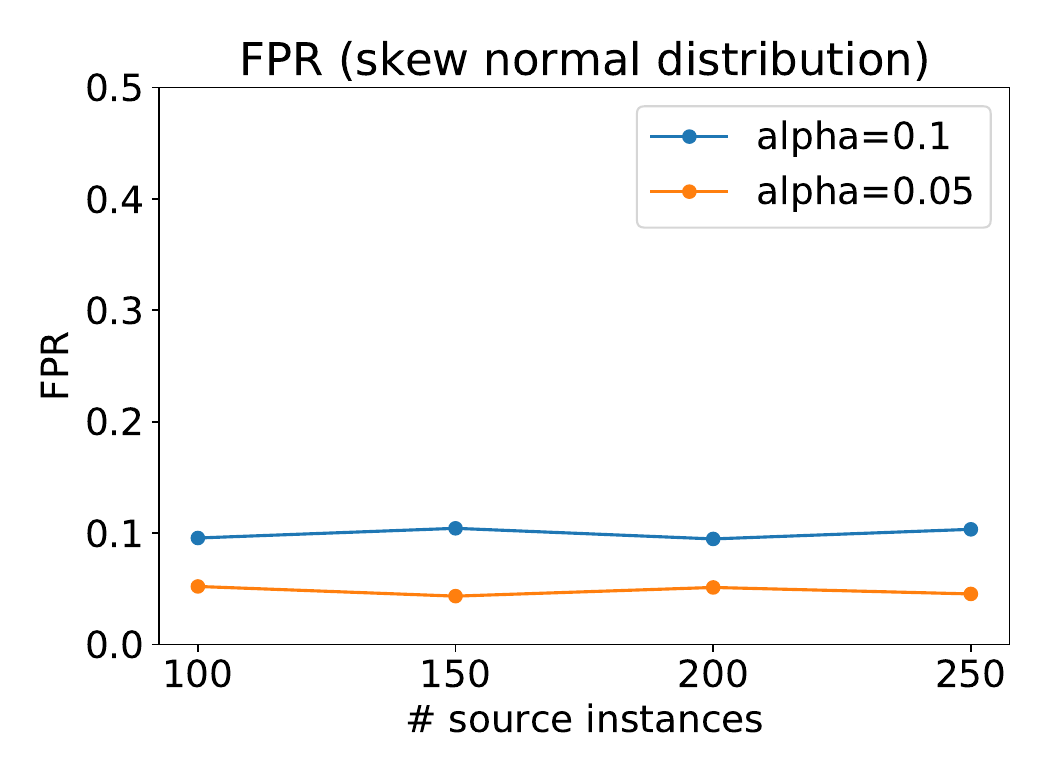}
\caption{Skew normal distribution}
\label{fig:app_skew_normal}
\end{subfigure}

\begin{subfigure}{0.3\linewidth}
\centering
\includegraphics[width=\linewidth]{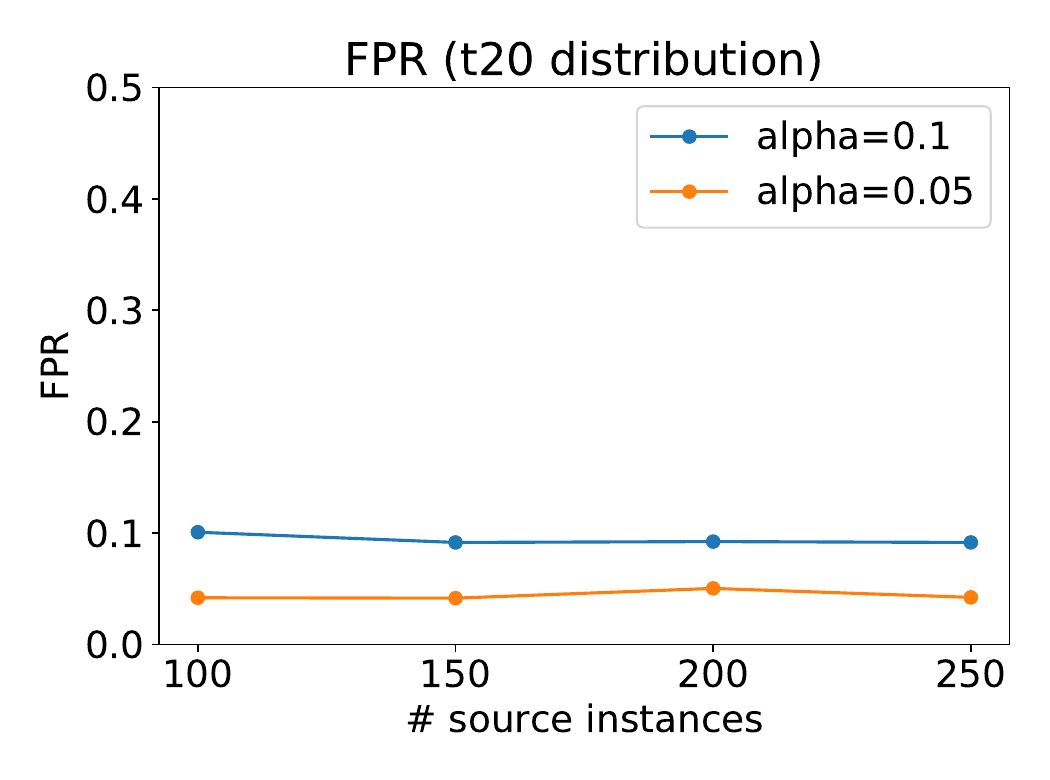}
\caption{$t_{20}$ distribution}
\label{fig:app_t_20}
\end{subfigure}
\begin{subfigure}{0.3\linewidth}
\centering
\includegraphics[width=\linewidth]{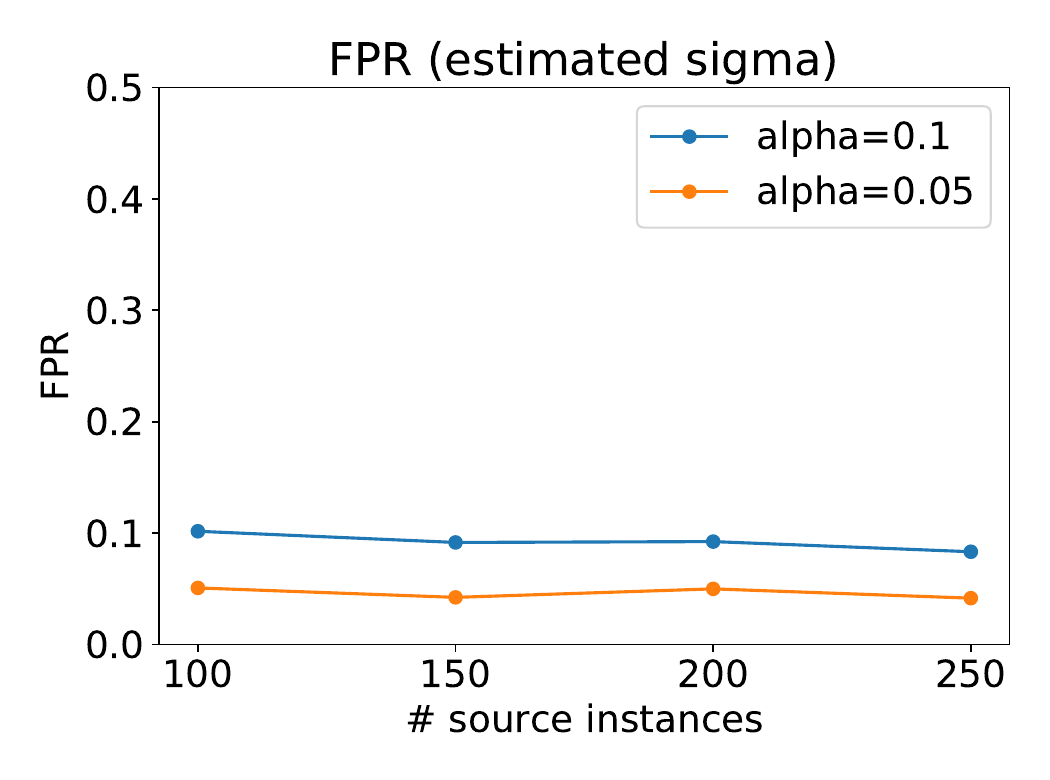}
\caption{Estimated variance}
\label{fig:app_unknown_sigma}
\end{subfigure}
\caption{False positive rate of the proposed {\tt CAD-DA} method when data is non-normal or variance is unknown.}
\label{fig:violate_assumption}
\end{figure}

\end{document}